\newtheorem{theorem}{Theorem}
\newenvironment{proof}{{\noindent\it Proof Sketch.}}{\hfill $\square$\par}
\newtheorem{corollary}{Corollary}
\title{\huge \textbf {Reinforcement Learning with Dynamic Boltzmann Softmax Updates} }
\author{
  Ling Pan$^{1}$, Qingpeng Cai$^1$, Qi Meng$^2$, Wei Chen$^2$, Longbo Huang$^1$, Tie-Yan Liu$^2$ \\
  $^1$IIIS, Tsinghua University\\
  $^2$Microsoft Research Asia\\
}
\date{}
\begin{document}
\maketitle

\begin{abstract}
Value function estimation is an important task in reinforcement learning, i.e., prediction. The Boltzmann softmax operator is a natural value estimator and can provide several benefits. However, it does not satisfy the non-expansion property, and its direct use may fail to converge even in value iteration. In this paper, we propose to update the value function with dynamic Boltzmann softmax (DBS) operator, which has good convergence property in the setting of planning and learning. Experimental results on GridWorld show that the DBS operator enables better estimation of the value function, which rectifies the convergence issue of the softmax operator. Finally, we propose the DBS-DQN algorithm by applying dynamic Boltzmann softmax updates in deep Q-network, which outperforms DQN substantially in 40 out of 49 Atari games.
\end{abstract}

\section{Introduction}
Reinforcement learning has achieved groundbreaking success for many decision making problems, including robotics\cite{kober2013reinforcement}, game playing\cite{mnih2015human,silver2017mastering}, and many others.
Without full information of transition dynamics and reward functions of the environment,  the agent learns an optimal policy by interacting with the environment from experience.

Value function estimation is an important task in reinforcement learning, i.e., prediction \cite{sutton1988learning,d2016estimating,xu2018meta}.
In the prediction task, it requires the agent to have a good estimate of the value function in order to update towards the true value function.
A key factor to prediction is the action-value summary operator. 
The action-value summary operator for a popular off-policy method, Q-learning \cite{watkins1989learning}, is the hard max operator, which always commits to the maximum action-value function according to current estimation for updating the value estimator.
This results in pure exploitation of current estimated values and lacks the ability to consider other potential actions-values.
The ``hard max'' updating scheme may lead to misbehavior due to noise in stochastic environments \cite{hasselt2010double,van2013estimating,fox2015taming}. 
Even in deterministic environments, this may not be accurate as the value estimator is not correct in the early stage of the learning process.
Consequently, choosing an appropriate action-value summary operator is of vital importance.

The Boltzmann softmax operator is a natural value estimator \cite{sutton1998introduction,azar2012dynamic,cesa2017boltzmann} based on the Boltzmann softmax distribution, which is a natural scheme to address the exploration-exploitation dilemma and has been widely used in reinforcement learning \cite{sutton1998introduction,azar2012dynamic,cesa2017boltzmann}.
In addition, the Boltzmann softmax operator also provides benefits for reducing overestimation and gradient noise in deep Q-networks \cite{song2018revisiting}.
However, despite the advantages, it is challenging to apply the Boltzmann softmax operator in value function estimation.
As shown in \cite{littman1996generalized,asadi2017alternative}, the Boltzmann softmax operator is not a non-expansion, which may lead to multiple fixed-points and thus the optimal value function of this policy is not well-defined.
Non-expansion is a vital and widely-used sufficient property to guarantee the convergence of the planning and learning algorithm.
Without such property, the algorithm may misbehave or even diverge.

We propose to update the value function using the dynamic Boltzmann softmax (DBS) operator with good convergence guarantee.
The idea of the DBS operator is to make the parameter $\beta$ time-varying while being state-independent.
We prove that having $\beta_t$ approach $\infty$ suffices to guarantee the convergence of value iteration with the DBS operator. 
Therefore, the DBS operator rectifies the convergence issue of the Boltzmann softmax operator with fixed parameters.
Note that we also achieve a tighter error bound for the fixed-parameter softmax operator in general cases compared with \cite{song2018revisiting}.
In addition, we show that the DBS operator achieves good convergence rate. 

Based on this theoretical guarantee, we apply the DBS operator to estimate value functions in the setting of model-free reinforcement learning without known model. 
We prove that the corresponding DBS Q-learning algorithm also guarantees convergence.
Finally, we propose the DBS-DQN algorithm, which generalizes our proposed DBS operator from tabular Q-learning to deep Q-networks using function approximators in high-dimensional state spaces.

It is crucial to note the DBS operator is the only one that meets all desired properties proposed in \cite{song2018revisiting} up to now, as it ensures Bellman optimality, enables overestimation reduction, directly represents a policy, can be applicable to double Q-learning \cite{hasselt2010double}, and requires no tuning.

To examine the effectiveness of the DBS operator, we conduct extensive experiments to evaluate the effectiveness and efficiency.
We first evaluate DBS value iteration and DBS Q-learning on a tabular game, the GridWorld.
Our results show that the DBS operator leads to smaller error and better performance than vanilla Q-learning and soft Q-learning \cite{haarnoja2017reinforcement}.
We then evaluate DBS-DQN on large scale Atari2600 games, and we show that DBS-DQN outperforms DQN in 40 out of 49 Atari games.

The main contributions can be summarized as follows:
\begin{itemize}
\item Firstly, we analyze the error bound of the Boltzmann softmax operator with arbitrary parameters, including static and dynamic. 
\item Secondly, we propose the dynamic Boltzmann softmax (DBS) operator, which has good convergence property in the setting of planning and learning. 
\item Thirdly, we conduct extensive experiments to verify the effectiveness of the DBS operator in a tabular game and a suite of 49 Atari video games. Experimental results verify our theoretical analysis and demonstrate the effectiveness of the DBS operator.
\end{itemize}

\section{Preliminaries}
A Markov decision process (MDP) is defined by a 5-tuple $(\mathcal{S}, \mathcal{A}, p, r, \gamma)$,
where $\mathcal{S}$ and $\mathcal{A}$ denote the set of states and actions,
$p(s'|s, a)$ represents the transition probability from state $s$ to state $s'$ under action $a$,
and $r(s, a)$ is the corresponding immediate reward.
The discount factor is denoted by $\gamma \in [0, 1)$, which controls the degree of importance of future rewards.

At each time, the agent interacts with the environment with its policy $\pi$, a mapping from state to action.
The objective is to find an optimal policy that maximizes the expected discounted long-term reward $\mathbb{E}[\sum_{t=0}^{\infty} \gamma^t r_t | \pi]$, which can be solved by estimating value functions.
The state value of $s$ and state-action value of $s$ and $a$ under policy $\pi$ are defined as $V^{\pi}(s) = \mathbb{E}_{\pi}[\sum_{t=0}^{\infty} \gamma^t r_t | s_0=s]$ and $Q^{\pi}(s, a) = \mathbb{E}_{\pi} [\sum_{t=0}^{\infty} \gamma^t r_t | s_0=s, a_0=a]$.
The optimal value functions are defined as $V^*(s) = \max_{\pi} V^{\pi}(s)$ and $Q^*(s,a) = \max_{\pi} Q^{\pi}(s,a)$.

The optimal value function $V^*$ and $Q^*$ satisfy the Bellman equation, which is defined recursively as in Eq. (\ref{eq:bellman}):
\begin{equation}
\begin{split}
& V^*(s) = \max_{a \in A} Q^*(s,a), \\
& Q^*(s,a) = r(s,a) + \gamma \sum_{s' \in S} p(s'|s,a) V^*(s').
\end{split}
\label{eq:bellman}
\end{equation}
Starting from an arbitrary initial value function $V_0$, the optimal value function $V^*$ can be computed by value iteration \cite{bellman1957dynamic} according to an iterative update $V_{k+1} = \mathcal{T} V_{k}$, where $\mathcal{T}$ is the Bellman operator defined by
\begin{equation}
(\mathcal{T}V)(s) = \max_{a \in A} \Big[ r(s,a) + \sum_{s' \in S} p(s'|s,a) \gamma V(s') \Big].
\end{equation}

When the model is unknown, Q-learning \cite{watkins1992q} is an effective algorithm to learn by exploring the environment.
Value estimation and update for a given trajectory $(s,a,r,s')$ for Q-learning is defined as:
\begin{equation}
Q(s,a) = (1-\alpha) Q(s,a) + \alpha \left( r + \gamma \max_{a'} Q(s',a') \right),
\label{eq:q_update}
\end{equation}
where $\alpha$ denotes the learning rate.
Note that Q-learning employs the hard max operator for value function updates, i.e.,
\begin{equation}
\max({\bf{X}}) = \max_i x_i.
\end{equation}

Another common operator is the log-sum-exp operator \cite{haarnoja2017reinforcement}:
\begin{equation}
L_{\beta}({\bf{X}}) = \frac{1}{\beta} \log(\sum_{i=1}^n e^{\beta x_i}).
\end{equation}
The Boltzmann softmax operator is defined as:
\begin{equation}
{\rm boltz}_{\beta}({\bf{X}}) = \frac{ \sum_{i=1}^n e^{\beta x_i} x_i} { \sum_{i=1}^n e^{\beta x_i} }.
\end{equation}

\section{Dynamic Boltzmann Softmax Updates} \label{sec:dbs}
In this section, we propose the dynamic Boltzmann softmax operator (DBS) for value function updates.
We show that the DBS operator does enable the convergence in value iteration, and has good convergence rate guarantee.
Next, we show that the DBS operator can be applied in Q-learning algorithm, and also ensures the convergence.

The DBS operator is defined as: $\forall s \in \mathcal{S}$, 
\begin{equation}
{\rm{boltz}}_{\beta_t}(Q(s, \cdot)) = \frac{\sum_{a \in \mathcal{A}} e^{\beta_t Q(s,a)} Q(s,a)}{\sum_{a \in \mathcal{A}} e^{\beta_t Q(s,a)}},
\end{equation}
where $\beta_t$ is non-negative.
Our core idea of the DBS operator $\rm{boltz}_{\beta_t}$ is to dynamically adjust the value of $\beta_t$ during the iteration. 

We now give theoretical analysis of the proposed DBS operator and show that it has good convergence guarantee.

\subsection{Value Iteration with DBS Updates}
Value iteration with DBS updates admits a time-varying, state-independent sequence $\{\beta_t\}$ and updates the value function according to the DBS operator $\rm{boltz}_{\beta_t}$ by iterating the following steps:
\begin{equation}
\label{dbs_vi}
\begin{split}
&\forall s,a, Q_{t+1}(s, a) \leftarrow  \sum_{s'} p(s' | s,a) \left[ r(s,a) + \gamma V_t(s') \right] \\
&\forall s, V_{t+1}(s) \leftarrow  {\rm{boltz}}_{\beta_t}(Q_{t+1}(s, \cdot))
\end{split}
\end{equation}
For the ease of the notations, we denote $\mathcal{T}_{\beta_t} $ the function that iterates any value function by Eq. (\ref{dbs_vi}). 

Thus, the way to update the value function is according to the exponential weighting scheme which is related to both the current estimator and the parameter $\beta_t$.

\subsubsection{Theoretical Analysis}
It has been shown that the Boltzmann softmax operator is not a non-expansion \cite{littman1996generalized}, as it does not satisfy Ineq. (\ref{eq:non_expansion}).
\begin{equation}
\begin{split}
&| {\rm boltz}_{\beta} (Q_1(s, \cdot)) - {\rm boltz}_{\beta} (Q_2(s, \cdot)) | \\
\leq &\max_{a}| Q_1(s,a) - Q_2(s,a)|, \ \forall s \in \mathcal{S}.
\end{split}
\label{eq:non_expansion}
\end{equation}
Indeed, the non-expansion property is a vital and widely-used sufficient condition for achieving convergence of learning algorithms.
If the operator is not a non-expansion, the uniqueness of the fixed point may not be guaranteed, which can lead to misbehaviors in value iteration.

In Theorem \ref{tm:dbs_con}, we provide a novel analysis which demonstrates that the DBS operator enables the convergence of DBS value iteration to the optimal value function. 

\begin{theorem}
	{\bf{(Convergence of value iteration with the DBS operator)}}
	For any dynamic Boltzmann softmax operator $ \rm{boltz}_{\beta_t} $,
	if $\beta_t$ approaches $\infty$,
	 the value function after $t$ iterations $V_t$ converges to the optimal value function $V^*$.
	\label{tm:dbs_con}
\end{theorem}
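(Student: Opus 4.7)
The plan is to treat the DBS update as a perturbation of standard value iteration, where the perturbation is controlled by how close the Boltzmann softmax is to the hard max as $\beta$ grows. The standard Bellman operator $\mathcal{T}$ is a $\gamma$-contraction in $\|\cdot\|_\infty$ with unique fixed point $V^*$, and the DBS operator $\mathcal{T}_{\beta_t}$ differs from $\mathcal{T}$ only in replacing $\max_a$ with $\mathrm{boltz}_{\beta_t}$ on the inner Q-vector. First I would establish a scalar estimate of the form $|\mathrm{boltz}_\beta(X)-\max(X)|\le c(|\mathcal{A}|)/\beta$ valid for any $X$ in a uniformly bounded range. Writing $x_i=M-d_i$ with $M=\max_j x_j$ and $d_i\ge 0$ gives
\begin{equation*}
M-\mathrm{boltz}_\beta(X)=\frac{\sum_i d_i e^{-\beta d_i}}{\sum_j e^{-\beta d_j}},
\end{equation*}
whose numerator is bounded via the elementary inequality $d e^{-\beta d}\le 1/(\beta e)$, or alternatively via comparison with the log-sum-exp $L_\beta$ satisfying $|L_\beta(X)-\max(X)|\le \log|\mathcal{A}|/\beta$. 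Because rewards are bounded and $\gamma<1$, the iterates $V_t$ and the induced $Q_t(s,a)=r(s,a)+\gamma\sum_{s'}p(s'|s,a)V_t(s')$ remain in a uniformly bounded range, so the same bound $\varepsilon_t:=c/\beta_t$ applies at every step and every state.

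Next I would push this into a per-step error decomposition. By the triangle inequality and the contraction of $\mathcal{T}$,
\begin{equation*}
\|V_{t+1}-V^*\|_\infty \le \|\mathcal{T}_{\beta_t}V_t-\mathcal{T}V_t\|_\infty+\|\mathcal{T}V_t-\mathcal{T}V^*\|_\infty \le \varepsilon_t+\gamma\,\|V_t-V^*\|_\infty,
\end{equation*}
using $V^*=\mathcal{T}V^*$. Unrolling the recursion yields
\begin{equation*}
\|V_t-V^*\|_\infty \le \gamma^t\|V_0-V^*\|_\infty+\sum_{k=0}^{t-1}\gamma^{t-1-k}\varepsilon_k.
\end{equation*}

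The first summand vanishes as $t\to\infty$ since $\gamma<1$. For the weighted tail I would use the standard two-piece argument: given any $\eta>0$, since $\beta_k\to\infty$ we have $\varepsilon_k\to 0$, so pick $K$ with $\varepsilon_k<\eta(1-\gamma)/2$ for all $k\ge K$; the tail from $k\ge K$ then contributes at most $\eta/2$, and the finite head from $k<K$ is damped by a factor of order $\gamma^{t-K}/(1-\gamma)$, dropping below $\eta/2$ once $t$ is large. The main obstacle is really the first step: because $\mathrm{boltz}_\beta$ fails non-expansion, one cannot invoke a Banach-style fixed-point argument for $\mathcal{T}_{\beta_t}$ directly, and the entire proof hinges on establishing a uniform, $\beta$-dependent closeness between $\mathrm{boltz}_\beta$ and $\max$ that transfers cleanly through the Bellman recursion. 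Once this pointwise bound is in hand, the remainder is a routine perturbed-contraction and tail estimate, and notably the conclusion holds for \emph{any} rate at which $\beta_t\to\infty$, with the rate affecting only the speed of convergence.
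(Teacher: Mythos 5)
Your proposal is correct and follows essentially the same route as the paper's proof: the same triangle-inequality decomposition into a softmax-versus-max gap of order $1/\beta_t$ plus a $\gamma$-contraction term for the max Bellman operator, the same unrolled recursion $\|V_t-V^*\|_\infty\le\gamma^t\|V_0-V^*\|_\infty+\sum_k\gamma^{t-k}\varepsilon_k$, and the same head/tail argument showing the discounted error sum vanishes. The only cosmetic differences are your elementary constant $|\mathcal{A}|/(e\beta)$ in place of the paper's entropy-based bound $\log|\mathcal{A}|/\beta$ (whose log-sum-exp comparison you also cite as the alternative), and your uniform-boundedness check on the iterates, which neither bound actually requires.
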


\begin{proof}
By the same way as Eq. (\ref{dbs_vi}), let  $\mathcal{T}_m$ be the function that iterates any value function by the max operator.

Thus, we have
{\small \begin{equation}
\begin{split}
|| (\mathcal{T}_{\beta_t} V_1) - (\mathcal{T}_m V_2) ||_{\infty} & \leq \underbrace{|| (\mathcal{T}_{\beta_t} V_1) - (\mathcal{T}_m V_1) ||_{\infty}}_{(A)} \\
&+ \underbrace{|| (\mathcal{T}_m V_1) - (\mathcal{T}_m V_2) ||_{\infty}}_{(B)}
\label{eq:term_ab_sum}
\end{split}
\end{equation}}

For the term $(A)$, we have
{\small \begin{align}
& ||(\mathcal{T}_{\beta_t} V_1) - (\mathcal{T}_m V_1)||_{\infty}
\leq \frac{\log(|A|)}{\beta_t} \label{eq:boltz_lse_1}
\end{align}}

For the proof of the Ineq. (\ref{eq:boltz_lse_1}), please refer to the supplemental material.

For the term $(B)$, we have
{\small \begin{align}
& || (\mathcal{T}_m V_1) - (\mathcal{T}_m V_2) ||_{\infty} 
\leq \gamma ||V_1 - V_2||
\label{eq:term_b}
\end{align}}

Combining (\ref{eq:term_ab_sum}), (\ref{eq:boltz_lse_1}), and (\ref{eq:term_b}), we have
{\small \begin{equation}
|| (\mathcal{T}_{\beta_t} V_1) - (\mathcal{T}_m V_2) ||_{\infty} \leq \gamma ||V_1 - V_2||_{\infty} + \frac{\log(|A|)}{\beta_t}
\end{equation}}

As the max operator is a contraction mapping, then from Banach fixed-point theorem \cite{banach} we have
$\mathcal{T}_m V^* = V^*$.

By the definition of DBS value iteration in Eq. (\ref{dbs_vi}), 
{\small \begin{align}
& ||V_t - V^*||_{\infty} \\
=& || (\mathcal{T}_{\beta_t} ... \mathcal{T}_{\beta_1}) V_0 - (\mathcal{T}_m ... \mathcal{T}_m) V^*||_{\infty} \\
\leq& \gamma || (\mathcal{T}_{\beta_{t-1}} ... \mathcal{T}_{\beta_1}) V_0 - (\mathcal{T}_m ... \mathcal{T}_m) V^*||_{\infty} + \frac{\log(|A|)}{\beta_t} \\
\leq& \gamma^t ||V_0 - V^*||_{\infty} + \log(|A|) \sum_{k=1}^t \frac{\gamma^{t-k}}{\beta_k} \label{eq:vi_con_basic}
\end{align}}

If $\beta_t \to \infty$, then $\lim_{t \to \infty} \sum_{k=1}^t \frac{\gamma^{t-k}}{\beta_k} = 0$, where the full proof is referred to the supplemental material.

Taking the limit of the right hand side of Eq. (\ref{eq:vi_con_basic}), we obtain $\lim_{t \to \infty} ||V_{t+1} - V^*||_{\infty} = 0$.
\end{proof}

Theorem \ref{tm:dbs_con} implies that DBS value iteration does converge to the optimal value function if $\beta_t$ approaches infinity.
During the process of dynamically adjusting $\beta_t$, although the non-expansion property may be violated for some certain values of $\beta$, we only need the state-independent parameter $\beta_t$ to approach infinity to guarantee the convergence.

Now we justify that the DBS operator has good convergence rate guarantee, where the proof is referred to the supplemental material.
\begin{theorem}
{\bf{(Convergence rate of value iteration with the DBS operator)}} For any power series $\beta_t = t^p (p > 0)$, let $V_0$ be an arbitrary initial value function such that $||V_0||_{\infty}\leq \frac{R}{1-\gamma}$, where $R=\max_{s,a}|r(s,a)|$, 
	we have that for any non-negative $\epsilon < 1/4$, after
	$\max \{ O\big( \frac{\log(\frac{1}{\epsilon})  + \log(\frac{1}{1 - \gamma}) + \log (R)}{\log(\frac{1}{\gamma})}), O\big({( \frac{1}{(1 - \gamma) \epsilon})}^{\frac{1}{p}}\big) \}$
	steps, the error $||V_t - V^*||_{\infty} \leq \epsilon$.
\label{tm:dbs_con_rate}
\end{theorem}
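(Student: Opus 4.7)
The plan is to start from the recursion derived in the proof of Theorem \ref{tm:dbs_con}, namely
\begin{equation*}
\|V_t - V^*\|_\infty \;\leq\; \gamma^t \|V_0 - V^*\|_\infty \;+\; \log(|A|)\sum_{k=1}^t \frac{\gamma^{t-k}}{\beta_k},
\end{equation*}
and to force each of the two summands to be at most $\epsilon/2$. Since $\|V_0\|_\infty \leq R/(1-\gamma)$ and $\|V^*\|_\infty \leq R/(1-\gamma)$, the first summand is bounded by $\gamma^t \cdot 2R/(1-\gamma)$; requiring this to be at most $\epsilon/2$ yields $t \geq \log\bigl(4R/(\epsilon(1-\gamma))\bigr)/\log(1/\gamma)$, which matches the first term in the $\max$.

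The heart of the argument is bounding the tail sum $S_t := \sum_{k=1}^t \gamma^{t-k}/k^p$ for $\beta_k = k^p$. The trick I would use is a split-range estimate at $k = t/2$. For the low-index piece $k \leq t/2$, bound $\gamma^{t-k} \leq \gamma^{t/2}$ and pull it out; the remaining sum $\sum_{k\leq t/2} 1/k^p$ grows at most polynomially in $t$ (or is bounded by $\zeta(p)$ when $p>1$), so exponential decay makes this piece negligible compared to $\epsilon$ once $t$ is at least the first bound above. For the high-index piece $k > t/2$, bound $1/k^p \leq (2/t)^p$ and pull it out; the remaining geometric sum $\sum_{k>t/2}\gamma^{t-k}$ is at most $1/(1-\gamma)$. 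Therefore
\begin{equation*}
S_t \;\leq\; \gamma^{t/2}\,C_p(t) \;+\; \frac{(2/t)^p}{1-\gamma},
\end{equation*}
where $C_p(t)$ is at most polynomial in $t$. Demanding $\log(|A|)\cdot (2/t)^p/(1-\gamma) \leq \epsilon/4$ gives $t = O\bigl((1/((1-\gamma)\epsilon))^{1/p}\bigr)$, which is the second term in the $\max$; the contribution from the $\gamma^{t/2}$ tail is absorbed because it is already dominated by the first $\max$ term.

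Taking $t$ to exceed both lower bounds then ensures $\|V_t - V^*\|_\infty \leq \epsilon$, producing the stated $\max$-of-two-rates complexity. The main technical obstacle is the split-range estimate on $S_t$: naively summing $1/k^p$ diverges when $p \leq 1$, so the delicate point is showing that the exponential factor $\gamma^{t-k}$ localizes the dominant contribution to indices $k = \Theta(t)$, which is exactly what the split at $t/2$ captures. The rest is bookkeeping on logarithms to match the announced $O(\cdot)$ expressions, using $\|V_0 - V^*\|_\infty \leq 2R/(1-\gamma)$ and $\epsilon < 1/4$ to ensure all logarithms have the correct sign.
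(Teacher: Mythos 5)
Your proof is correct, and it follows the paper's skeleton --- the recursion $||V_t - V^*||_{\infty} \leq \gamma^t ||V_0 - V^*||_{\infty} + \log(|A|)\sum_{k=1}^t \gamma^{t-k}/\beta_k$ inherited from Theorem \ref{tm:dbs_con}, then separate step-count bounds for the two summands --- but it handles the crucial sum $S_t = \sum_{k=1}^t \gamma^{t-k}/k^p$ in a genuinely different way. The paper evaluates $S_t$ by writing it as $\gamma^t\big[{\rm Li}_p(\gamma^{-1}) - \gamma^{-(t+1)}\Phi(\gamma^{-1},p,t+1)\big]$, a polylogarithm minus a Lerch transcendent, and then quotes the asymptotic expansion of \cite{ferreira2004asymptotic} to conclude $S_t = \Theta\big(\tfrac{1}{(1-\gamma)(t+1)^p}\big)$; your split at $k = t/2$ instead derives the one-sided bound $S_t \leq \gamma^{t/2}C_p(t) + \tfrac{(2/t)^p}{1-\gamma}$ using nothing but geometric series. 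The trade-off: the paper's route gives the asymptotically tight order with clean constants, but it leans on special-function asymptotics, and its intermediate decomposition into two infinite series is delicate because both series diverge termwise when $\gamma^{-1} > 1$ (the polylogarithm at that argument only makes sense via analytic continuation); your localization argument is elementary, self-contained, and uniform in $p > 0$, at the price of the constant $2^p$ and an extra exponentially small term --- both invisible inside the stated $O(\cdot)$. One detail to spell out if you write this up in full: absorbing the $\gamma^{t/2}C_p(t)$ piece into the first branch of the $\max$ is not literally implied by that branch's inequality, since you need $(t/2)\log(1/\gamma)$ to dominate $\log(1/\epsilon) + \log C_p(t)$ rather than $t\log(1/\gamma)$ to dominate $\log(1/\epsilon)$; this holds after inflating $t$ by a constant factor, which the big-$O$ tolerates, but it deserves an explicit sentence.
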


For the larger value of $p$, the convergence rate is faster.
Note that when $p$ approaches $\infty$, the convergence rate is dominated by the first term, which has the same order as that of the standard Bellman operator, implying that the DBS operator is competitive with the standard Bellman operator in terms of the convergence rate in known environment.

From the proof techniques in Theorem \ref{tm:dbs_con}, we derive the error bound of value iteration with the Boltzmann softmax operator with fixed parameter $\beta$ in Corollary \ref{cor:bs_error_bound}, and the proof is referred to the supplemental material.

\begin{corollary}
{\bf{(Error bound of value iteration with Boltzmann softmax operator)}}
For any Boltzmann softmax operator with fixed parameter $\beta$, we have
\begin{equation}
\lim_{t\rightarrow \infty}||V_t - V^*||_{\infty} \leq \min\left\{\frac{\log(|A|)}{\beta (1 - \gamma)}, \frac{2R}{{(1-\gamma)}^{2}}\right\}.
\end{equation}
\label{cor:bs_error_bound}
\end{corollary}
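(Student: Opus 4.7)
The plan is to derive the two quantities inside the minimum separately and then take their infimum. The first bound is a direct specialization of the telescoping inequality already obtained in the proof of Theorem \ref{tm:dbs_con}, while the second is a crude a priori bound on how far $V_t$ can stray from $V^*$ regardless of $\beta$.

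For the first term, I would substitute $\beta_k \equiv \beta$ into
\[
\|V_t - V^*\|_\infty \;\leq\; \gamma^t \|V_0 - V^*\|_\infty + \log(|A|)\sum_{k=1}^t \frac{\gamma^{t-k}}{\beta_k},
\]
which was established in Theorem \ref{tm:dbs_con} for an arbitrary schedule $\{\beta_k\}$. With the schedule constant, the sum collapses to a geometric series,
\[
\sum_{k=1}^t \frac{\gamma^{t-k}}{\beta} \;=\; \frac{1 - \gamma^t}{\beta(1-\gamma)}.
\]
Letting $t \to \infty$ kills the transient $\gamma^t \|V_0 - V^*\|_\infty$ and sends the geometric sum to $\frac{1}{\beta(1-\gamma)}$, yielding $\limsup_{t\to\infty}\|V_t - V^*\|_\infty \leq \frac{\log(|A|)}{\beta(1-\gamma)}$.

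For the second term I would exploit the fact that ${\rm boltz}_\beta$ is a convex combination of its inputs, so $|{\rm boltz}_\beta(Q(s,\cdot))| \leq \max_a |Q(s,a)|$. A straightforward induction on $t$ in the update Eq.~(\ref{dbs_vi}) then gives $\|V_t\|_\infty \leq \frac{R}{1-\gamma}$ asymptotically (starting from any bounded $V_0$, which one may assume to lie in $\|V_0\|_\infty \leq R/(1-\gamma)$). Combined with the standard bound $\|V^*\|_\infty \leq R/(1-\gamma)$, the triangle inequality gives an absolute bound of the stated form, where the $(1-\gamma)^2$ in the denominator is a conservative cushion that absorbs all transient effects. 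Taking the minimum of the two bounds and sending $t\to\infty$ delivers the claim.

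The only delicate point I anticipate is that, since ${\rm boltz}_\beta$ is not a non-expansion, the sequence $V_t$ is not a priori guaranteed to converge, so the statement has to be read as a bound on $\limsup_{t\to\infty}\|V_t - V^*\|_\infty$; once one invokes the $\limsup$ interpretation, both pieces of the argument go through with no further difficulty, and the first (geometric-sum) piece is what makes the bound non-trivial — informative when $\beta$ is large while the convex-combination bound dominates for small $\beta$.
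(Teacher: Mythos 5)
Your proposal is correct, and while its first half coincides with the paper's argument, the second half takes a genuinely different and in fact sharper route. For the bound $\frac{\log(|A|)}{\beta(1-\gamma)}$ you do exactly what the paper does: specialize the telescoped inequality from the proof of Theorem \ref{tm:dbs_con} to the constant schedule $\beta_k \equiv \beta$ and sum the geometric series. For the second bound, however, the paper does not bound $\|V_t\|_\infty$ directly; it bounds the per-iteration deviation $|{\rm boltz}_{\beta}(Q(s,\cdot)) - \max_a Q(s,a)|$ by the spread $\max_a Q(s,a) - \min_a Q(s,a)$, controls that spread by $2R + \gamma\left(\max_s V_{k-1}(s) - \min_s V_{k-1}(s)\right)$, unrolls this recursion to get an asymptotic spread bound of order $\frac{2R}{1-\gamma}$, and then pushes these per-step deviations through the same telescoping sum, picking up another factor of $\frac{1}{1-\gamma}$ and landing exactly on $\frac{2R}{(1-\gamma)^2}$. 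Your route --- convex-combination boundedness of ${\rm boltz}_{\beta}$, the induction $\|V_{t+1}\|_\infty \leq R + \gamma\|V_t\|_\infty$, and a single triangle inequality against $\|V^*\|_\infty \leq \frac{R}{1-\gamma}$ --- is more elementary and yields $\limsup_{t\to\infty}\|V_t - V^*\|_\infty \leq \frac{2R}{1-\gamma}$, which is strictly tighter than the stated $\frac{2R}{(1-\gamma)^2}$ and therefore implies the corollary a fortiori; your phrase ``conservative cushion'' is exactly right, since the stated constant is loose relative to what your argument delivers. Your closing observation that the statement must be read as a bound on the $\limsup$ (the iterates need not converge, as ${\rm boltz}_{\beta}$ with fixed $\beta$ is not a non-expansion) is also correct, and it applies equally to the paper's own proof, which writes $\lim$ but in fact only establishes a bound of this limsup type.
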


Here, we show that after an infinite number of iterations, the error between the value function $V_t$ computed by the Boltzmann softmax operator with the fixed parameter $\beta$ at the $t$-th iteration and the optimal value function $V^*$ can be upper bounded.
However, although the error can be controlled, the direct use of the Boltzmann softmax operator with fixed parameter may introduce performance drop in practice, due to the fact that it violates the non-expansion property.

Thus, we conclude that the DBS operator performs better than the traditional Boltzmann softmax operator with fixed parameter in terms of convergence.

\subsubsection{Relation to Existing Results}
In this section, we compare the error bound in Corollary \ref{cor:bs_error_bound} with that in \cite{song2018revisiting}, which studies the error bound of the softmax operator with a fixed parameter $\beta$.

Different from \cite{song2018revisiting}, we provide a more general convergence analysis of the softmax operator covering both static and dynamic parameters.
We also achieve a tighter error bound when 
\begin{equation}
\beta \geq \frac{2}{\max \{ \frac{\gamma(|A|-1)}{\log(|A|)}, \frac{2 \gamma (|A|-1) R}{1-\gamma} \} -1},
\label{eq:beta_range}
\end{equation}
where $R$ can be normalized to $1$ and $|A|$ denotes the number of actions.
The term on the RHS of Eq. (\ref{eq:beta_range}) is quite small as shown in Figure \ref{fig:bound}(a), where we set $\gamma$ to be some commonly used values in $\{0.85, 0.9, 0.95, 0.99\}$.
The shaded area corresponds to the range of $\beta$ within our bound is tighter, which is a general case.

\begin{figure}[!h]
	\centering
	\subfigure[Range of $\beta$ within which our bound is tighter.
	]{
		\begin{minipage}[t]{0.5\linewidth}
			\centering
			\includegraphics[scale=0.35]{./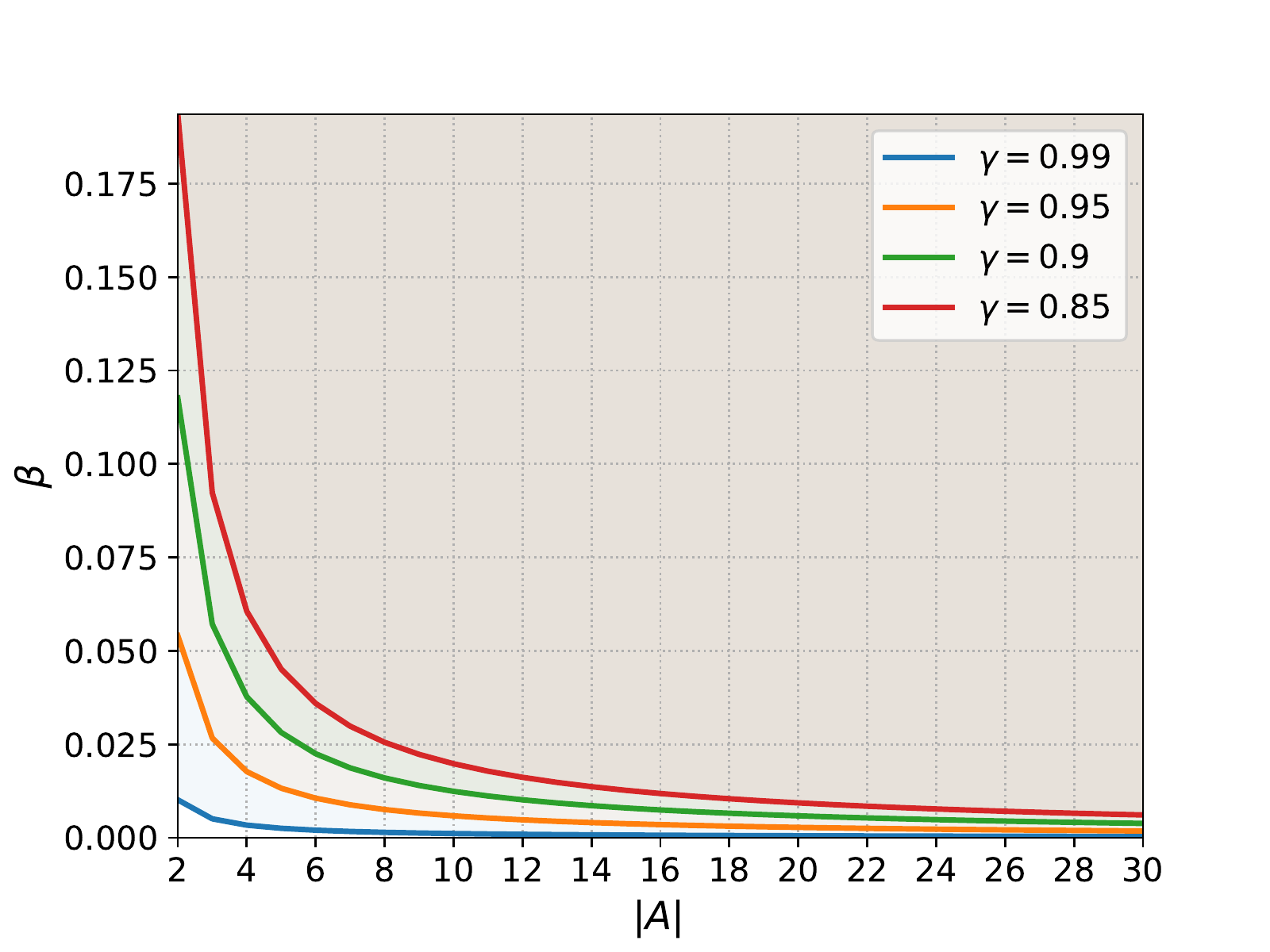}
		\end{minipage}
	}%
	\subfigure[Improvement ratio.
	]{
		\begin{minipage}[t]{0.5\linewidth}
			\centering
			\includegraphics[scale=0.35]{./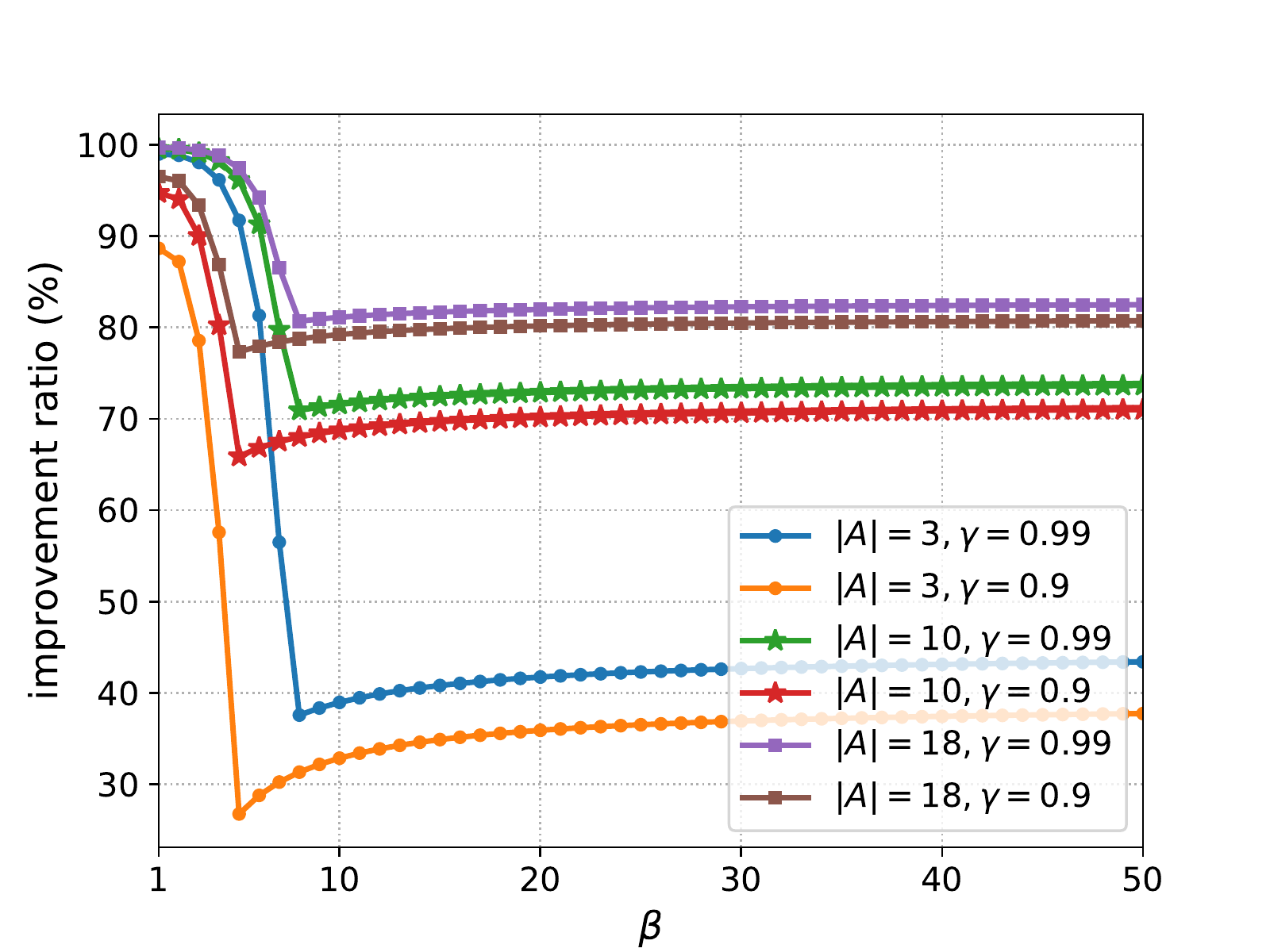}
		\end{minipage}
	}
	\caption{Error bound comparison.}
	\label{fig:bound}
\end{figure}

Please note that the case where $\beta$ is extremely small, i.e., approaches $0$, is usually not considered in practice.
Figure \ref{fig:bound}(b) shows the improvement of the error bound, which is defined as $\frac{\rm their \ bound - our \ bound}{\rm their \ bound} \times 100\%$.
Note that in the Arcade Learning Environment \cite{bellemare2013arcade}, $|A|$ is generally in $[3, 18]$.
Moreover, we also give an analysis of the convergence rate of the DBS operator.

\subsubsection{Empirical Results} \label{sec:vi_exp}
We first evaluate the performance of DBS value iteration to verify our convergence results in a toy problem, the GridWorld (Figure \ref{fig:vi_gridworld}(a)), which is a larger variant of the environment of \cite{o2016combining}.

\begin{figure}[!h]
	\subfigure[GridWorld.
	]{
		\begin{minipage}[t]{0.5\linewidth}
			\centering
			\includegraphics[scale=0.33]{./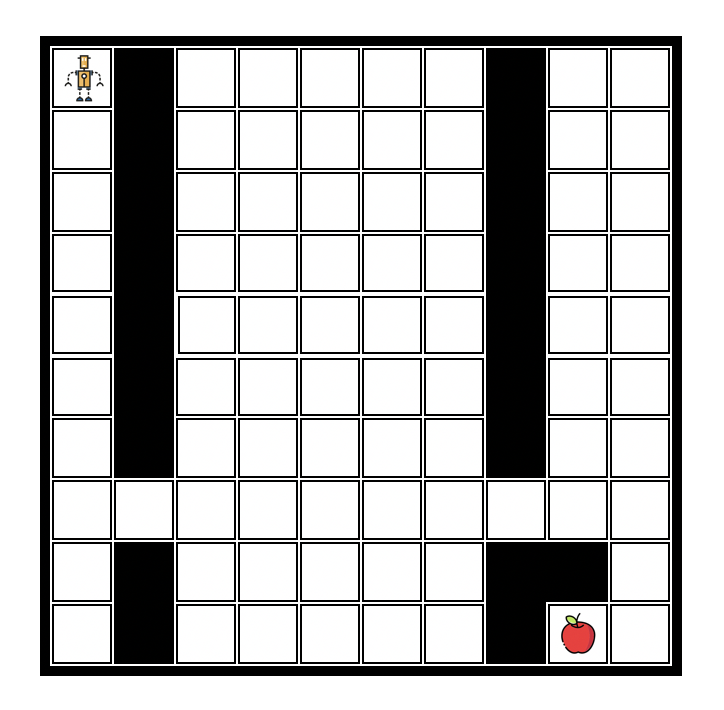}
		\end{minipage}
	}%
	\subfigure[Value loss.
	]{
		\begin{minipage}[t]{0.5\linewidth}
			\centering
			\includegraphics[scale=0.35]{./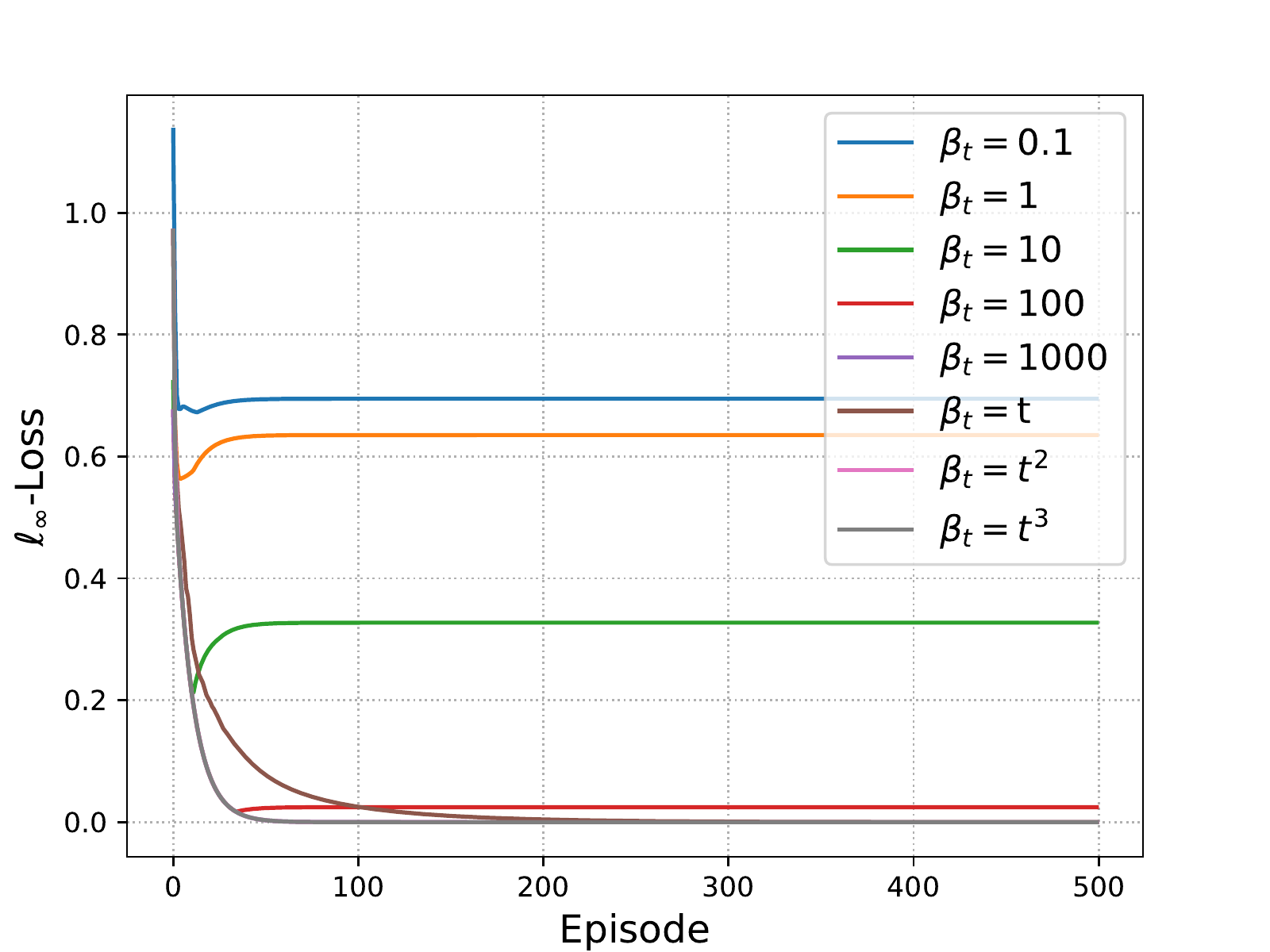}
		\end{minipage}
	}
	\subfigure[Value loss of the last episode in log scale.
	]{
		\begin{minipage}[t]{0.5\linewidth}
			\centering
			\includegraphics[scale=0.35]{./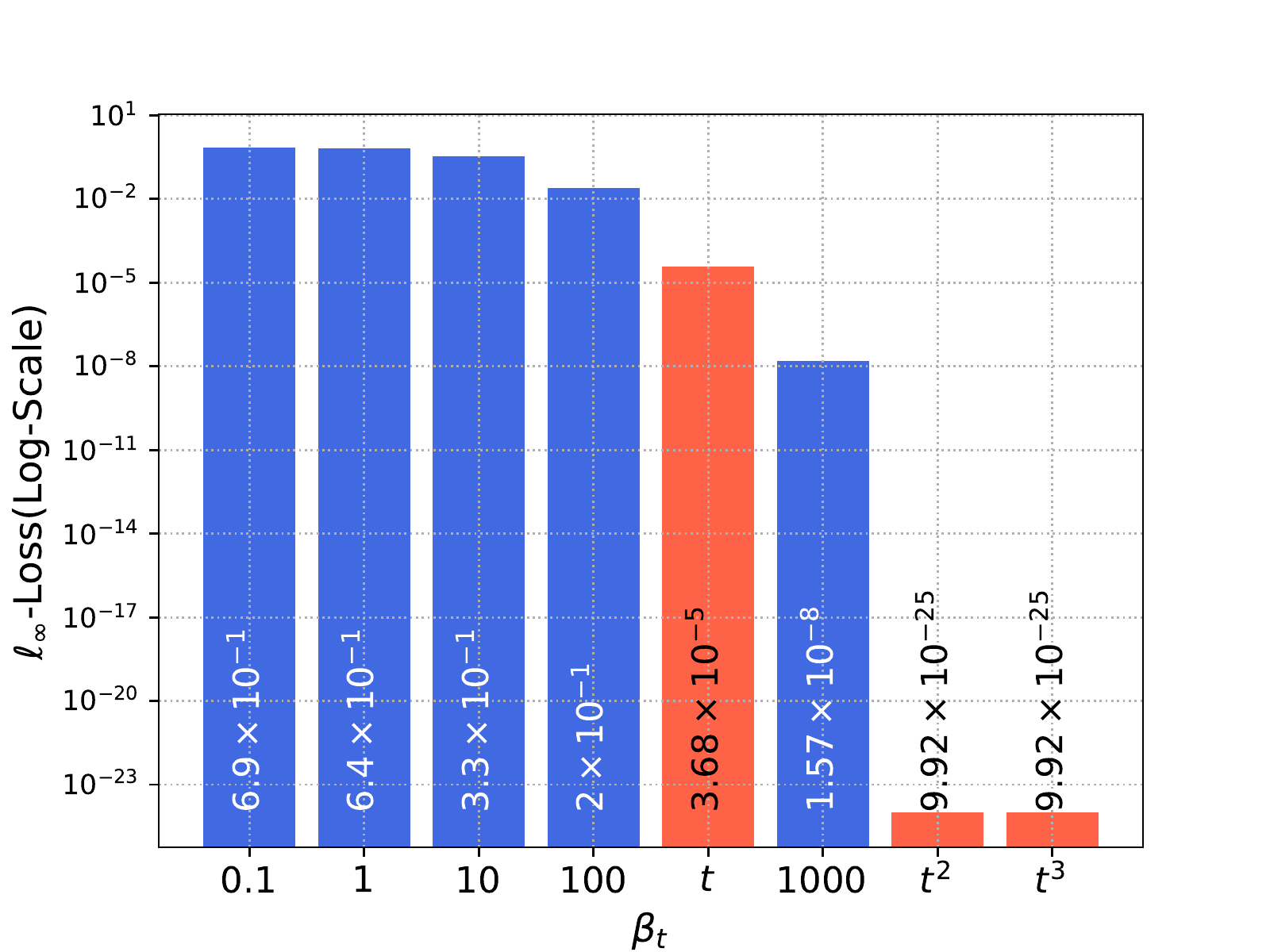}
		\end{minipage}
	}%
	\subfigure[Convergence rate.
	]{
		\begin{minipage}[t]{0.5\linewidth}
			\centering
			\includegraphics[scale=0.35]{./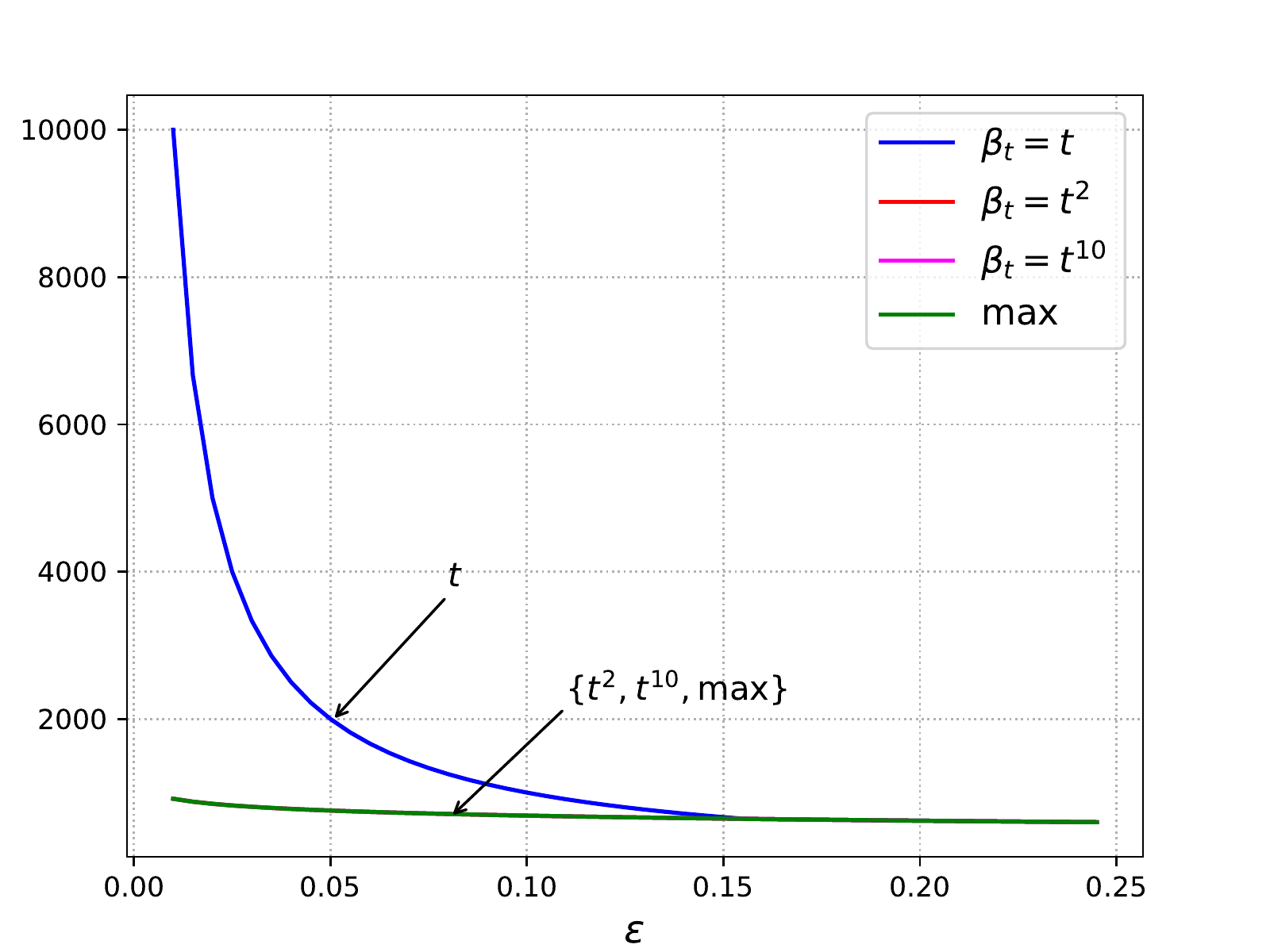}
		\end{minipage}
	}
	\caption{DBS value iteration in GridWorld.}
	\label{fig:vi_gridworld}
\end{figure}

The GridWorld consists of $10 \times 10$ grids, with the dark grids representing walls.
The agent starts at the upper left corner and aims to eat the apple at the bottom right corner upon receiving a reward of $+1$.
Otherwise, the reward is $0$.
An episode ends if the agent successfully eats the apple or a maximum number of steps $300$ is reached.
For this experiment, we consider the discount factor $\gamma = 0.9$.

The value loss of value iteration is shown in Figure \ref{fig:vi_gridworld}(b).
As expected, for fixed $\beta$, a larger value leads to a smaller loss.
We then zoom in on Figure \ref{fig:vi_gridworld}(b) to further illustrate the difference between fixed $\beta$ and dynamic $\beta_t$ in Figure \ref{fig:vi_gridworld}(c), which shows the value loss for the last episode in log scale.
For any fixed $\beta$, value iteration suffers from some loss which decreases as $\beta$ increases.
For dynamic $\beta_t$, the performance of $t^2$ and $t^3$ are the same and achieve the smallest loss in the domain game.
Results for the convergence rate is shown in Figure \ref{fig:vi_gridworld}(d).
For higher order $p$ of $\beta_t=t^p$, the convergence rate is faster.
We also see that the convergence rate of $t^2$ and $t^{10}$ is very close and matches the performance of the standard Bellman operator as discussed before.

From the above results, we find a convergent variant of the Boltzmann softmax operator with good convergence rate, which paves the path for its use in reinforcement learning algorithms with little knowledge little about the environment.

\subsection{Q-learning with DBS Updates}
\label{sec:q_theoy}
In this section, we show that the DBS operator can be applied in a model-free Q-learning algorithm.

\begin{algorithm}[!h]
	\caption{Q-learning with DBS updates}
	\label{algo:gen_q}
	\begin{algorithmic}[1]
		\STATE Initialize $Q(s,a), \forall s \in\mathcal{S}, a \in \mathcal{A}$ arbitrarily, and $Q(terminal \ state, \cdot)=0$ 
		\FOR {each episode $t=1, 2, ...$}
		\STATE Initialize $s$ 
		\FOR {each step of episode}
		\STATE $\triangleright${\textbf{action selection}}
		\STATE choose $a$ from $s$ using $\epsilon$-greedy policy
		\STATE take action $a$, observe $r, s'$ 
		\STATE $\triangleright${\textbf{value function estimation}}
		\STATE $V(s') = {\rm boltz}_{\beta_t} \left( Q(s', \cdot) \right)$
		\STATE $Q(s,a) \leftarrow Q(s,a) + \alpha_t \left[r + \gamma V(s') - Q(s,a) \right]$ 
		\STATE $s \leftarrow s'$
		\ENDFOR
		\ENDFOR
	\end{algorithmic}
\end{algorithm}

According to the DBS operator, we propose the DBS Q-learning algorithm (Algorithm \ref{algo:gen_q}).
Please note that the action selection policy is different from the Boltzmann distribution.

As seen in Theorem \ref{tm:dbs_con_rate}, a larger $p$ results in faster convergence rate in value iteration.
However, this is not the case in Q-learning, which differs from value iteration in that it knows little about the environment, and the agent has to learn from experience.
If $p$ is too large, it quickly approximates the max operator that favors commitment to current action-value function estimations.
This is because the max operator always greedily selects the maximum action-value function according to current estimation, which may not be accurate in the early stage of learning or in noisy environments.
As such, the max operator fails to consider other potential action-value functions.

\subsubsection{Theoretical analysis}
We get that DBS Q-learning converges to the optimal policy under the same additional condition as in DBS value iteration. The full proof is referred to the supplemental material.

Besides the convergence guarantee, we show that the Boltzmann softmax operator can mitigate the overestimation phenomenon of the max operator in Q-learning \cite{watkins1989learning} and the log-sum-exp operator in soft Q-learning \cite{haarnoja2017reinforcement}.

Let $X=\{ X_1, ..., X_M \}$ be a set of random variables, where the probability density function (PDF) and the mean of variable $X_i$ are denoted by $f_i$ and $\mu_i$ respectively. 
Please note that in value function estimation, the random variable $X_i$ corresponds to random values of action $i$ for a fixed state.
The goal of value function estimation is to estimate the maximum expected value $\mu^*(X)$,  and is defined as $\mu^*(X) = \max_i \mu_i = \max_i \int_{- \infty}^{+ \infty} x f_i(x) \text{d}x.$
However, the PDFs are unknown.
Thus, it is impossible to find $\mu^*(X)$ in an analytical way.
Alternatively, a set of samples $S = \{S_1, ..., S_M\}$ is given, where the subset $S_i$ contains independent samples of $X_i$.
The corresponding sample mean of $S_i$ is denoted by $\hat{\mu}_i$, which is an unbiased estimator of $\mu_i$. Let $\hat{F}_i$ denote the sample distribution of $\hat{\mu_i}$, $\hat{\mu}=(\hat{\mu}_1,...,\hat{\mu}_M)$, and $\hat{F}$ denote the joint distribution of $\hat{\mu}$.
The bias of any action-value summary operator $\bigotimes$ is defined as $\rm{Bias}(\hat{\mu}^*_{\bigotimes}) = \mathbb{E}_{\hat{\mu}\sim \hat{F}}[\bigotimes \hat{\mu}] - \mu^*(X),$ i.e., the difference between the expected estimated value by the operator over the sample distributions and the maximum expected value.

We now compare the bias for different common operators and we derive the following theorem, where the full proof is referred to the supplemental material.

\begin{theorem}
	Let $\hat{\mu}^*_{B_{\beta_t}}, \hat{\mu}^{*}_{\max}, \hat{\mu}^{*}_{L_{\beta}}$ denote the estimator with the DBS operator, the max operator, and the log-sum-exp operator, respectively.
	For any given set of $M$ random variables, we have
	$\forall t, \ \forall \beta,$
	\begin{equation}
	\rm{Bias}(\hat{\mu}^*_{B_{\beta_t}}) \leq \rm{Bias}(\hat{\mu}^{*}_{\max}) \leq \rm{Bias}(\hat{\mu}^{*}_{L_{\beta}}).
	\end{equation}
	\label{thm:overestimation}
\end{theorem}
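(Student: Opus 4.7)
The plan is to reduce the three-way bias comparison to two deterministic pointwise inequalities between the operators, then integrate against the joint sample distribution $\hat{F}$ and subtract the constant $\mu^*(X)$. Specifically, I aim to establish that for every realization $\hat{\mu}=(\hat{\mu}_1,\dots,\hat{\mu}_M)$ and every admissible parameter,
\[
\mathrm{boltz}_{\beta_t}(\hat{\mu}) \;\leq\; \max_i \hat{\mu}_i \;\leq\; L_\beta(\hat{\mu}),
\]
after which taking $\mathbb{E}_{\hat{\mu}\sim\hat{F}}[\cdot]$ and subtracting $\mu^*(X)$ yields the stated chain of biases directly by monotonicity and linearity of expectation.

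For the left inequality, I would simply observe that $\mathrm{boltz}_{\beta_t}(\hat{\mu}) = \sum_i w_i\, \hat{\mu}_i$ with weights $w_i = e^{\beta_t \hat{\mu}_i}/\sum_j e^{\beta_t \hat{\mu}_j}$ that are non-negative and sum to one. Hence the DBS output is a convex combination of the $\hat{\mu}_i$'s and is therefore bounded above by their maximum. Because this uses nothing beyond the definition of the softmax weights, it holds for any $\beta_t \geq 0$ — static or dynamic — which handles the ``$\forall t$'' quantifier in the statement.

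For the right inequality, I would invoke the standard log-sum-exp bound: for any $\beta > 0$, $e^{\beta \max_i \hat{\mu}_i} = \max_i e^{\beta \hat{\mu}_i} \leq \sum_i e^{\beta \hat{\mu}_i}$, and applying $\tfrac{1}{\beta}\log(\cdot)$ to both sides gives $\max_i \hat{\mu}_i \leq L_\beta(\hat{\mu})$; this covers the ``$\forall \beta$'' quantifier. Both pointwise estimates then transfer to the expectations on the joint sample distribution, and the bias comparison follows by subtracting $\mu^*(X)$ from each side.

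I do not expect any serious obstacle — the whole argument is two one-line estimates combined with monotonicity of expectation. The one conceptual point worth highlighting in the write-up is the link to classical overestimation: Jensen's inequality applied to the convex function $\max$ yields $\mathbb{E}[\max_i \hat{\mu}_i] \geq \max_i \mathbb{E}[\hat{\mu}_i] = \mu^*(X)$, so $\mathrm{Bias}(\hat{\mu}^*_{\max}) \geq 0$; the DBS operator pulls this estimate back toward a weighted mean, while the log-sum-exp pushes it further up by an amount at most $\log(M)/\beta$. This sandwich is exactly the overestimation-reduction phenomenon that the theorem formalizes.
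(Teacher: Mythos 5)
Your proposal is correct and follows essentially the same route as the paper: the paper's proof likewise establishes the deterministic pointwise sandwich $\mathrm{boltz}_{\beta_t}(x) \leq \max_i x_i \leq L_{\beta}(x)$ (its Proposition 2, proved via the same convex-combination and $\log$-of-max-exponential arguments), then takes expectations over $\hat{\mu}\sim\hat{F}$ and subtracts the operator-independent constant $\mu^*(X)$ to obtain the bias chain. Your closing remark connecting the sandwich to Jensen's inequality and the $\log(M)/\beta$ gap is a nice addition but is not needed for, nor present in, the paper's argument.
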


In Theorem \ref{thm:overestimation}, we show that although the log-sum-exp operator \cite{haarnoja2017reinforcement} is able to encourage exploration because its objective is an entropy-regularized form of the original objective, it may worsen the overestimation phenomenon.
In addition, the optimal value function induced by the log-sum-exp operator is biased from the optimal value function of the original MDP \cite{dai2018sbeed}.  
In contrast, the DBS operator ensures convergence to the optimal value function as well as reduction of overestimation.
 
\subsubsection{Empirical Results}
We now evaluate the performance of DBS Q-learning in the same GridWorld environment.
Figure \ref{fig:grid_world_q} demonstrates the number of steps the agent spent until eating the apple in each episode, and a fewer number of steps the agent takes corresponds to a better performance.

\begin{figure}[!h]
\centering 
 \includegraphics[scale=0.4]{./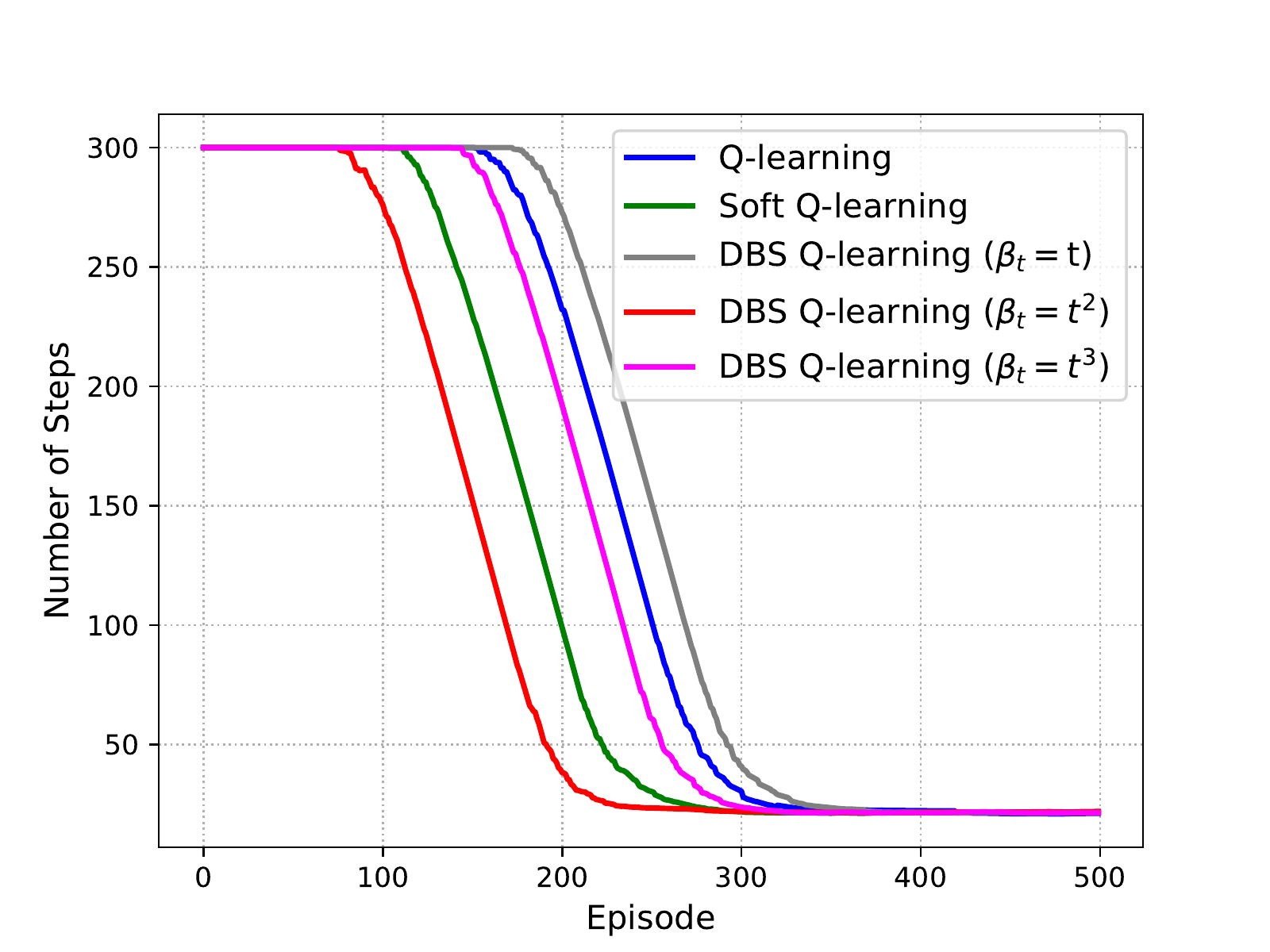} 
 \caption{Performance comparison of DBS Q-learning, Soft Q-learning, and Q-learning in GridWorld.}
\label{fig:grid_world_q}
\end{figure}

For DBS Q-learning, we apply the power function $\beta_t = t^p$ with $p$ denoting the order.
As shown, DBS Q-learning with the quadratic function achieves the best performance. 
Note that when $p=1$, it performs worse than Q-learning in this simple game, which corresponds to our results in value iteration (Figure \ref{fig:vi_gridworld}) as $p=1$ leads to an unnegligible value loss.
When the power $p$ of $\beta_t=t^p$ increases further, it performs closer to Q-learning. 

Soft Q-learning \cite{haarnoja2017reinforcement} uses the log-sum-exp operator, where the parameter is chosen with the best performance for comparison. 
Readers please refer to the supplemental material for full results with different parameters.  
In Figure \ref{fig:grid_world_q}, soft Q-learning performs better than Q-learning as it encourages exploration according to its entropy-regularized objective.
However, it underperforms DBS Q-learning $(\beta_t = t^2)$ as DBS Q-learning can guarantee convergence to the optimal value function and can eliminate the overestimation phenomenon.
Thus, we choose $p=2$ in the following Atari experiments.

\section{The DBS-DQN Algorithm}
In this section, we show that the DBS operator can further be applied to problems with high dimensional state space and action space. 

The DBS-DQN algorithm is shown in Algorithm 2. 
We compute the parameter of the DBS operator by applying the power function $\beta_t(c)=c \cdot t^2$ as the quadratic function performs the best in our previous analysis.
Here, $c$ denote the coefficient, and contributes to controlling the speed of the increase of $\beta_t(c)$.
In many problems, it is critical to choose the hyper-parameter $c$.
In order to make the algorithm more practical in problems with high-dimensional state spaces, we propose to learn to adjust $c$ in DBS-DQN by the meta gradient-based optimization technique based on \cite{xu2018meta}.

The main idea of gradient-based optimization technique is summarized below, which follows the online cross-validation principle \cite{sutton1992adapting}.
Given current experience $\tau=(s,a,r,s_{next})$, the parameter $\theta$ of the function approximator is updated according to 
\begin{equation}
\theta' = \theta - \alpha \frac{\partial J(\tau, \theta, c)}{\partial \theta},
\label{eq:theta_update}
\end{equation}
where $\alpha$ denotes the learning rate, and the loss of the neural network is
\begin{equation}
\begin{split}
&J(\tau, \theta, c) = \frac{1}{2} \left[ V(\tau, c;\theta^-) -Q(s,a;\theta)\right]^2, \\
&V(\tau,c;\theta^-) = r + \gamma {\rm boltz}_{\beta_t(c)} \left( Q(s_{next}, \cdot; \theta^-) \right),
\end{split}
\end{equation}
with $\theta^-$ denoting the parameter of the target network.
The corresponding gradient of $J(\tau, \theta, c)$ over $\theta$ is 
\begin{equation}
\begin{split}
\frac{\partial J(\tau, \theta, c)}{\partial \theta} =  -\big[ & r + \gamma \text{boltz}_{\beta_t(c)} (Q(s_{next}, \cdot; \theta^-)) \\
&  - Q(s, a; \theta) \big] \frac{\partial Q(s,a;\theta)}{\partial \theta}.
\end{split}
\end{equation}
Then, the coefficient $c$ is updated based on the subsequent experience $\tau'=(s',a,r',s'_{netx})$ according to the gradient of the squared error $J(\tau', \theta', \bar{c})$ between the value function approximator $Q(s'_{next},a';\theta')$ and the target value function $V(\tau', \bar{c};\theta^-)$, where $\bar{c}$ is the reference value.
The gradient is computed according to the chain rule in Eq. (\ref{eq:chain}).
\begin{equation}
\frac{\partial J'(\tau', \theta', \bar{c})}{\partial c} = \underbrace{  \frac{\partial J'(\tau', \theta', \bar{c})}{\partial \theta'}  }_{A} \underbrace{  \frac{\text{d} \theta'}{\text{d} c}  }_{B}.
\label{eq:chain}
\end{equation}
For the term (B), according to Eq. (\ref{eq:theta_update}), we have
\begin{equation}
\frac{\text{d} \theta'}{\text{d} c} = \alpha \gamma  \frac{\partial {\rm{boltz}}_{\beta_t(\bar{c})} (Q(s'_{next}, \cdot; \theta^-))}{\partial c}
\frac{\partial Q(s,a;\theta)}{\partial \theta}.
\end{equation}
Then, the update of $c$ is 
\begin{equation}
c' = c - \beta \frac{\partial J'(\tau', \theta', \bar{c})}{\partial c},
\end{equation}
with $\eta$ denoting the learning rate.

Note that it can be hard to choose an appropriate static value of sensitive parameter $\beta$.
Therefore, it requires rigorous tuning of the task-specific fixed parameter $\beta$ in different games in \cite{song2018revisiting}, which may limit its efficiency and applicability \cite{haarnoja2018learning}.
In contrast, the DBS operator is effective and efficient as it does not require tuning.

\begin{algorithm}[!h]
	\begin{algorithmic}[1]
		\caption{DBS Deep Q-Network}
		\STATE initialize experience replay buffer $\mathcal{B}$
		\STATE initialize Q-function and target Q-function with random weights $\theta$ and $\theta^-$
		\STATE initialize the coefficient $c$ of the parameter $\beta_t$ of the DBS operator
		\FOR {episode = 1, ..., M}
			\STATE initialize state $s_1$
			\FOR {step = 1, ..., T}
				\STATE choose $a_t$ from $s_t$ using $\epsilon$-greedy policy
				\STATE execute $a_t$, observe reward $r_t$, and next state $s_{t+1}$
				\STATE store experience $(s_t, a_t, r_t, s_{t+1})$ in $\mathcal{B}$
				\STATE calculate $\beta_t(c) = c \cdot t^2$
				\STATE sample random minibatch of experiences $(s_j, a_j, r_j, s_{j+1})$ from $\mathcal{B}$
				\IF {$s_{j+1}$ is terminal state}
					\STATE set $y_j = r_j$
				\ELSE
					\STATE set $y_j = r_j + \gamma {\rm{boltz}}_{\beta_t} \left( \hat{Q}(s_{j+1}, \cdot; \theta^-) \right)$
				\ENDIF
				\STATE perform a gradient descent step on $\left( y_j - Q(s_j, a_j; \theta) \right)^2$ w.r.t. $\theta$
				\STATE update $c$ according to the gradient-based optimization technique
				\STATE reset $\hat{Q} = Q$ every $C$ steps
			\ENDFOR
		\ENDFOR
		\label{algo:dbs_dqn}
	\end{algorithmic}
\end{algorithm}

\subsection{Experimental Setup}
We evaluate the DBS-DQN algorithm on 49 Atari video games from the Arcade Learning Environment \cite{bellemare2013arcade}, a standard challenging benchmark for deep reinforcement learning algorithms, by comparing it with DQN.
For fair comparison, we use the same setup of network architectures and hyper-parameters as in \cite{mnih2015human} for both DQN and DBS-DQN.
Our evaluation procedure is $30$ no-op evaluation which is identical to \cite{mnih2015human}, where the agent performs a random number (up to $30$) of ``do nothing'' actions in the beginning of an episode.
See the supplemental material for full implementation details.

\subsection{Effect of the Coefficient $c$}
The coefficient $c$ contributes to the speed and degree of the adjustment of $\beta_t$, and we propose to learn $c$ by the gradient-based optimization technique \cite{xu2018meta}.
It is also interesting to study the effect of the coefficient $c$ by choosing a fixed parameter, and we train DBS-DQN with differnt fixed paramters $c$ for 25M steps (which is enough for comparing the performance).
As shown in Figure \ref{fig:seaquest}, DBS-DQN with all of the different fixed parameters $c$ outperform DQN, and DBS-DQN achieves the best performance compared with all choices of $c$.
\begin{figure}[!h]
\centering
\includegraphics[scale=0.35]{./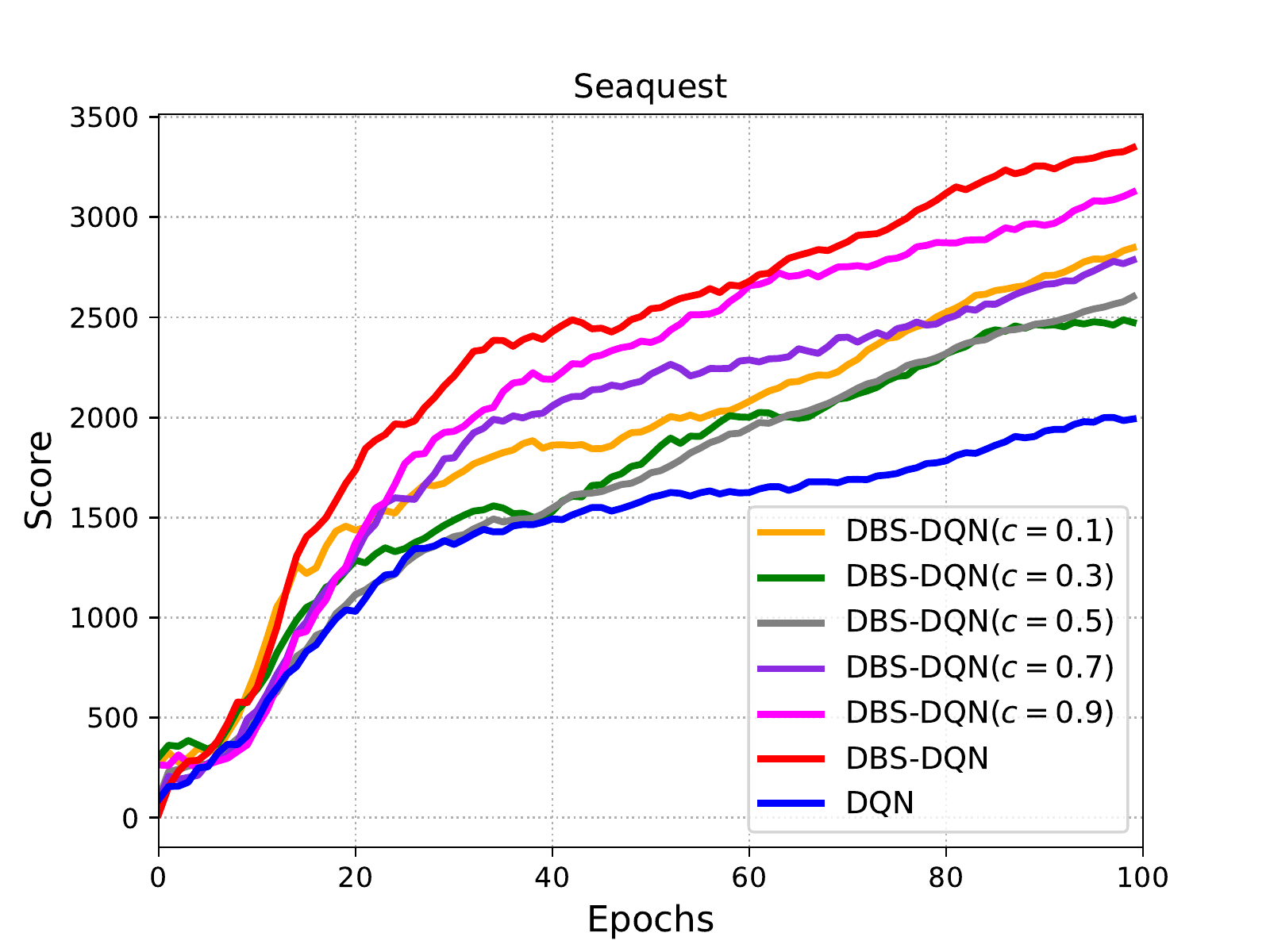}
\caption{Effect of the coefficient $c$ in the game Seaquest.}
\label{fig:seaquest}
\end{figure}

\subsection{Performance Comparison}
We evaluate the DBS-DQN algorithm on 49 Atari video games from the Arcade Learning Environment (ALE) \cite{bellemare2013arcade}, by comparing it with DQN. 
For each game, we train each algorithm for $50$M steps for 3 independent runs to evaluate the performance.
Table \ref{table:atari_mean_median} shows the summary of the median in human normalized score \cite{van2016deep} defined as:
\begin{equation}
\frac{ \rm {score_{agent} - score_{random}}  }{  \rm{score_{human} - score_{random}}  } \times 100\%,
\end{equation}
where human score and random score are taken from \cite{wang2015dueling}.
As shown in Table \ref{table:atari_mean_median}, DBS-DQN significantly outperforms DQN in terms the median of the human normalized score, and surpasses human level.
In all, DBS-DQN exceeds the performance of DQN in 40 out of 49 Atari games, and Figure \ref{fig: atari_learning_curve} shows the learning curves (moving averaged). 

\begin{table}[!h]
	\centering
	\caption{Summary of Atari games.}
	\begin{tabular}{lc}
		\toprule
		{\bf{Algorithm}} & {\bf{Median}} \\
		\midrule
		{\bf{DQN}} & 84.72\% \\
		{\bf{DBS-DQN}} & 104.49\% \\
		{\bf{DBS-DQN (fine-tuned $c$)}} & 103.95\% \\
		\bottomrule
	\end{tabular}
	\label{table:atari_mean_median}
\end{table}

\begin{figure}[!h]
	\centering
	\subfigure[Frostbite
	]{
		\begin{minipage}[t]{0.5\linewidth}
			\centering
			\includegraphics[scale=0.33]{./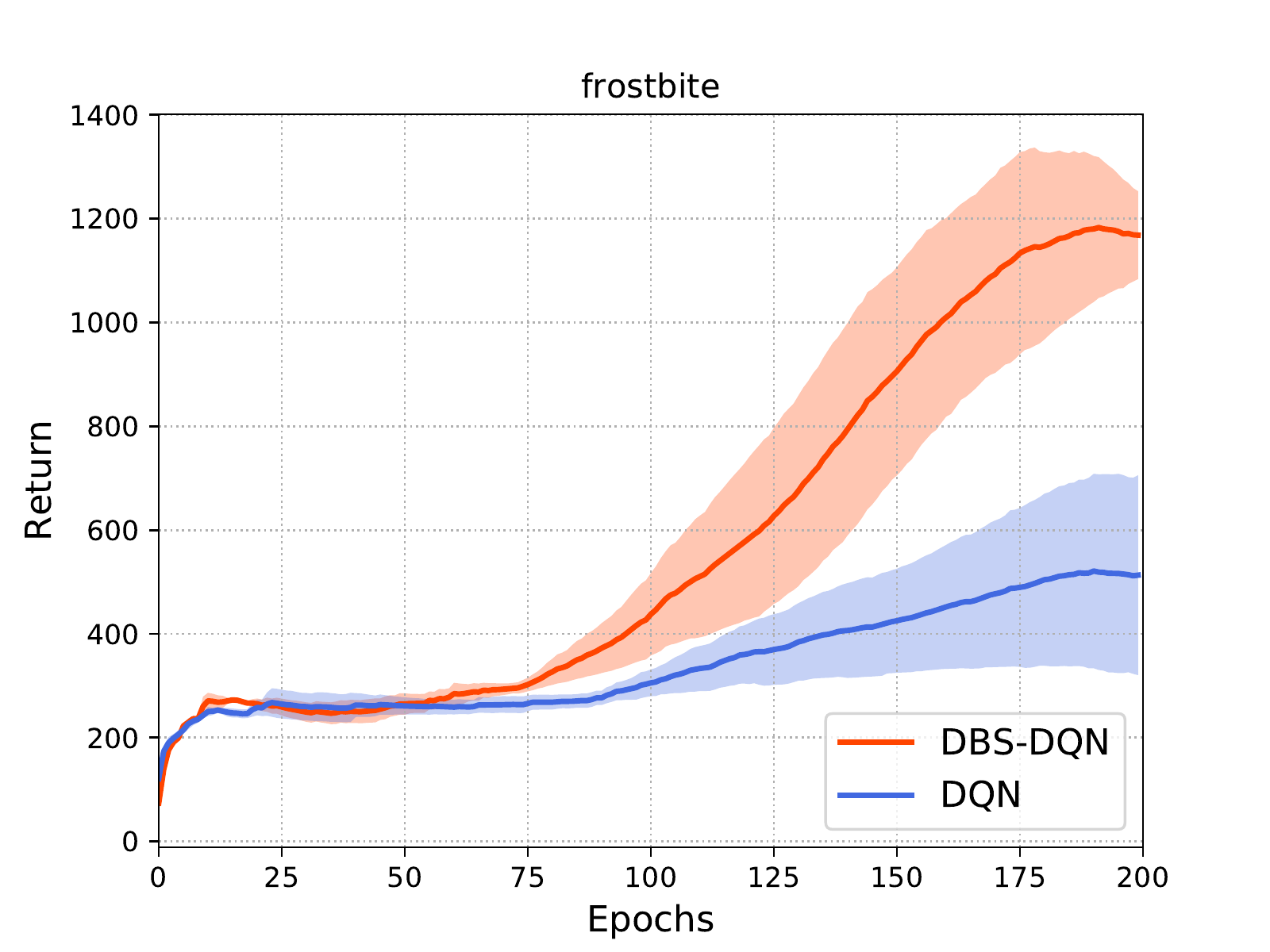}
		\end{minipage}
	}%
	\subfigure[IceHockey
	]{
		\begin{minipage}[t]{0.5\linewidth}
			\centering
			\includegraphics[scale=0.33]{./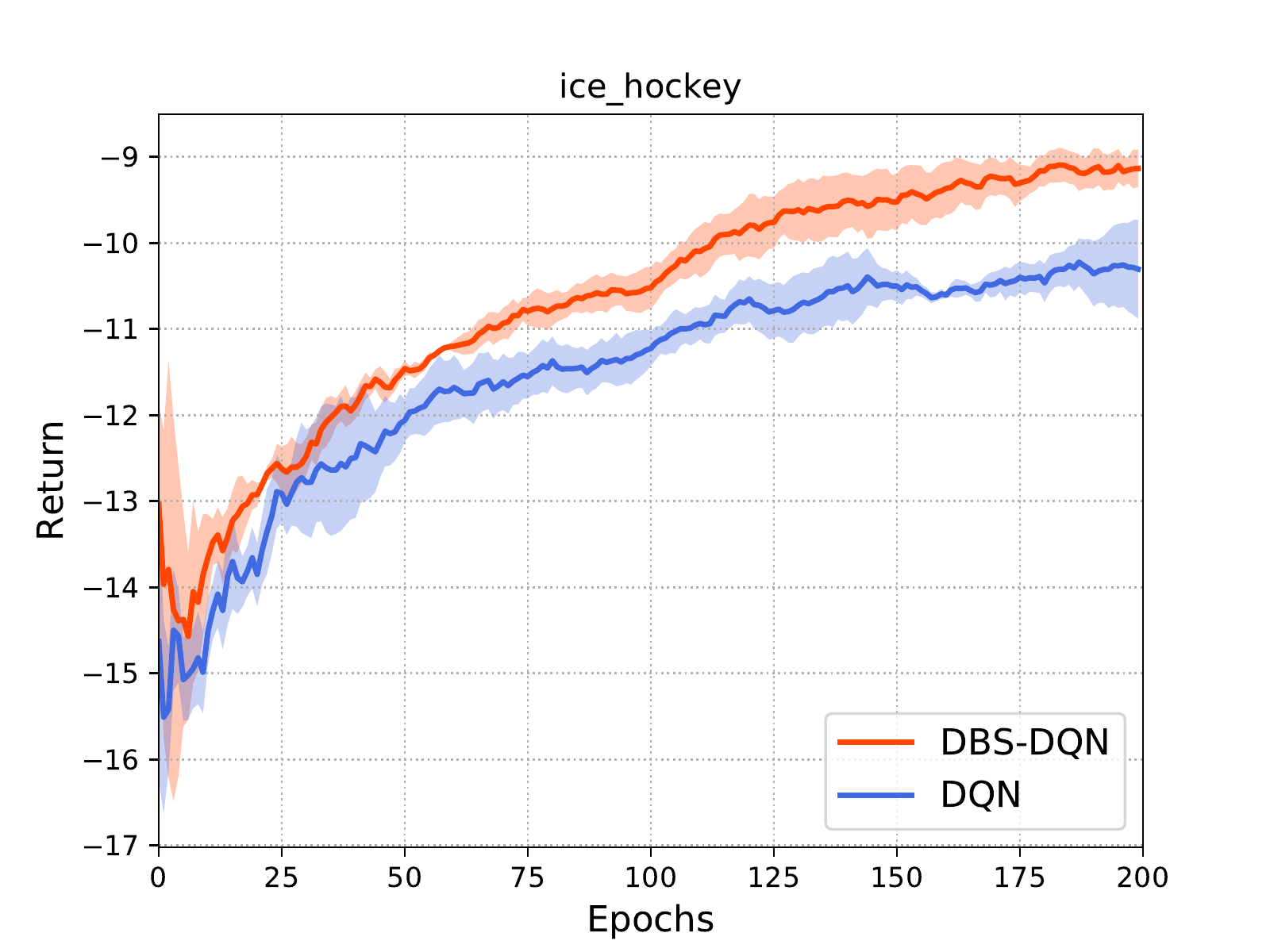}
		\end{minipage}
	}
	\subfigure[Riverraid
	]{
		\begin{minipage}[t]{0.5\linewidth}
			\centering
			\includegraphics[scale=0.33]{./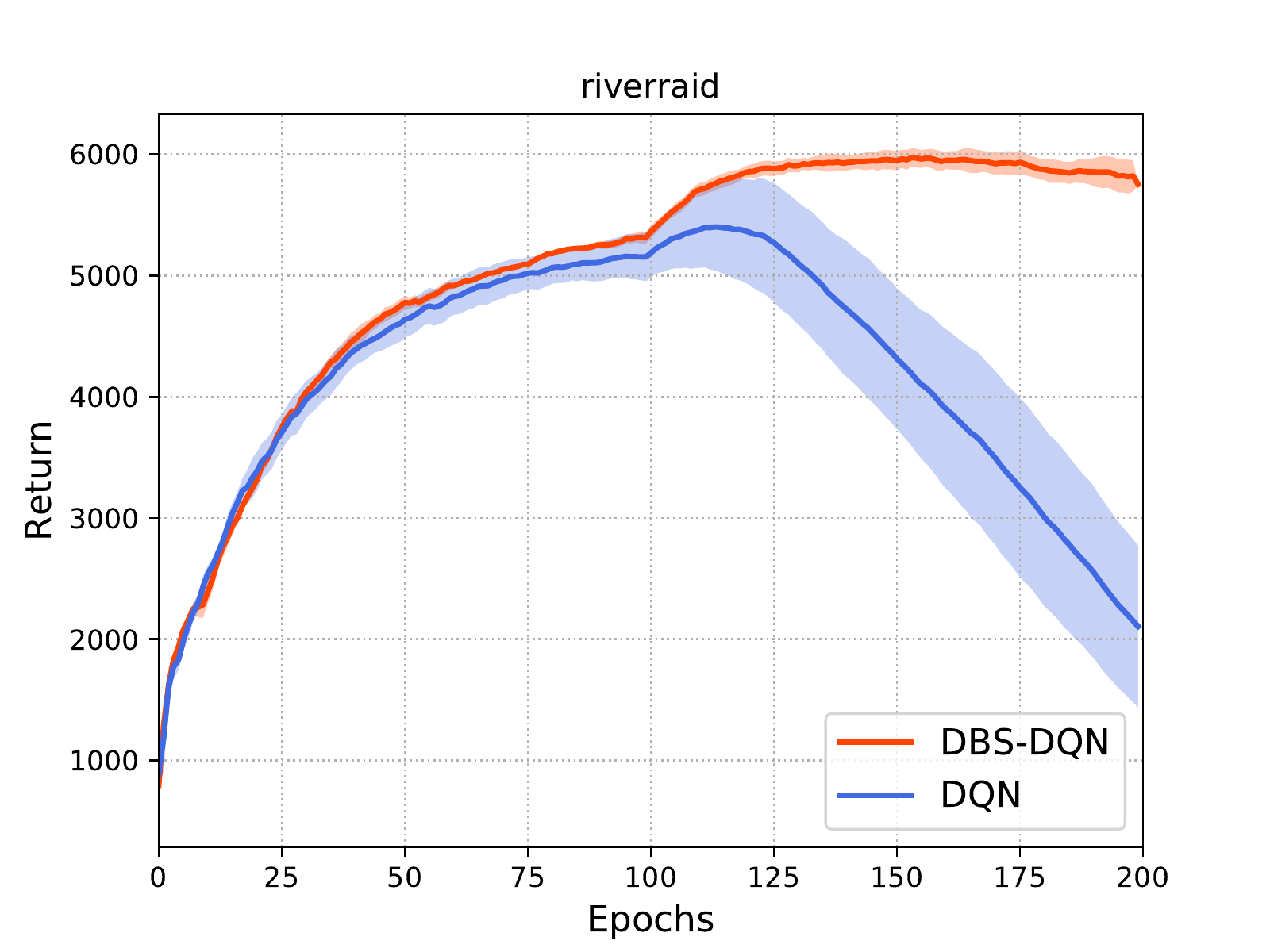}
		\end{minipage}
	}%
	\subfigure[RoadRunner
	]{
		\begin{minipage}[t]{0.5\linewidth} 
			\centering
			\includegraphics[scale=0.33]{./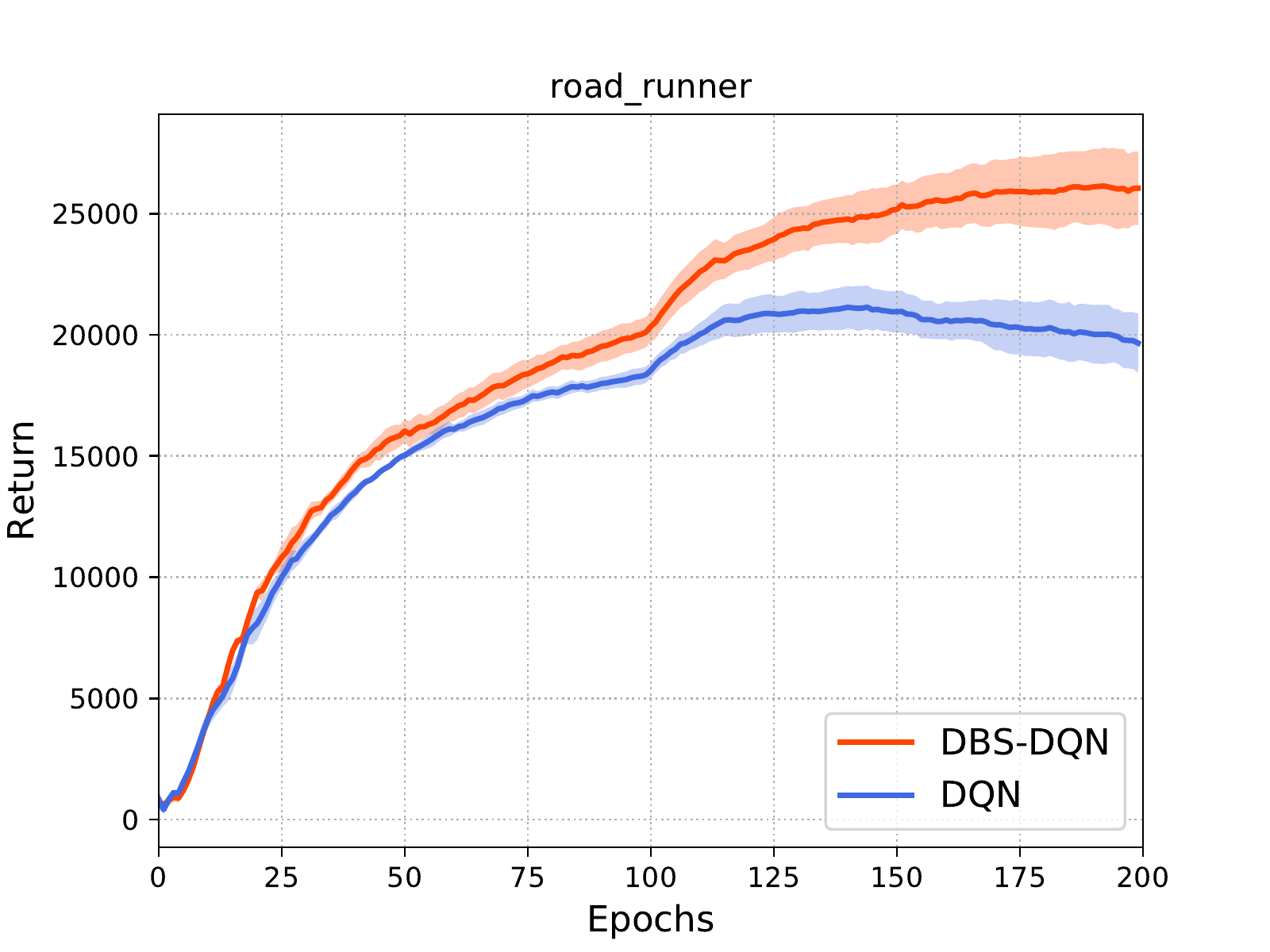}
		\end{minipage}
	}
	\subfigure[Seaquest
	]{
		\begin{minipage}[t]{0.5\linewidth}
			\centering
			\includegraphics[scale=0.33]{./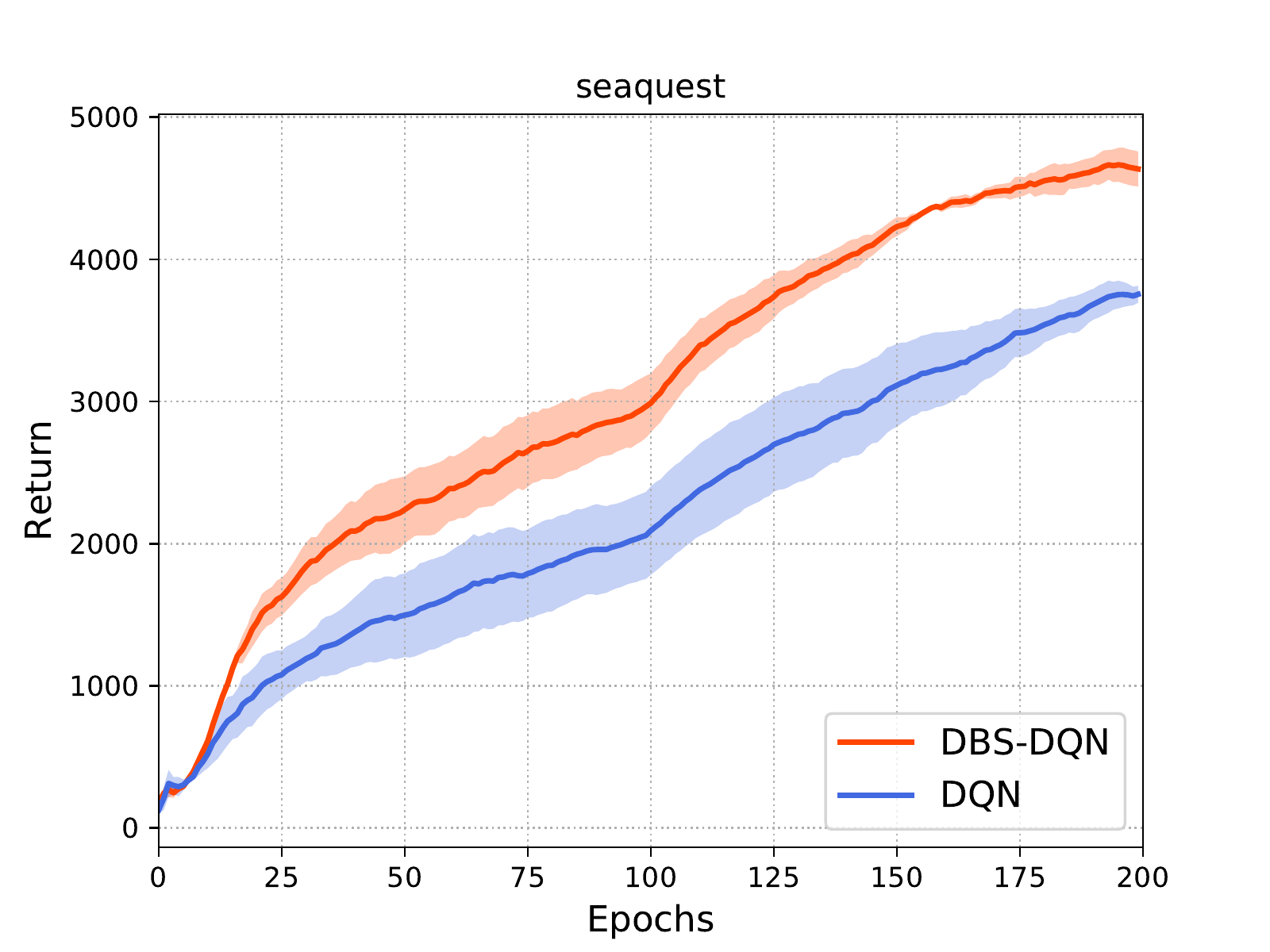}
		\end{minipage}
	}%
	\subfigure[Zaxxon
	]{
		\begin{minipage}[t]{0.5\linewidth}
			\centering
			\includegraphics[scale=0.33]{./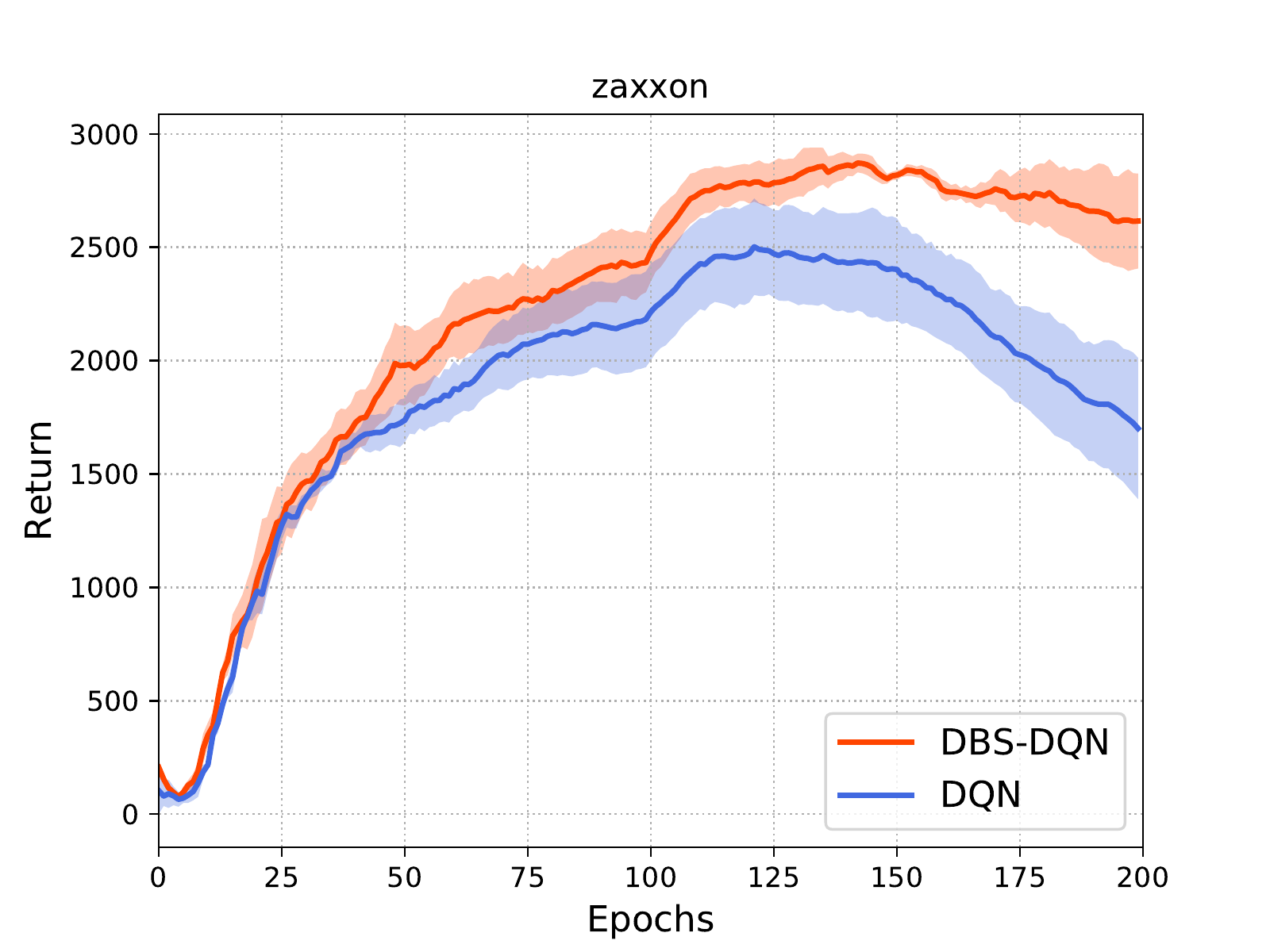}
		\end{minipage}
	}
	\caption{Learning curves in Atari games.}
	\label{fig: atari_learning_curve}
\end{figure}

To demonstrate the effectiveness and efficiency of DBS-DQN, we compare it with its variant with fine-tuned fixed coefficient $c$ in $\beta_t(c)$, i.e., without graident-based optimization, in each game.
From Table \ref{table:atari_mean_median}, DBS-DQN exceeds the performance of DBS-DQN (fine-tuned $c$), which shows that it is effective and efficient as it performs well in most Atari games which does not require tuning.
It is also worth noting that DBS-DQN (fine-tuned $c$) also achieves fairly good performance in term of the median and beats DQN in 33 out of 49 Atari games, which further illustrate the strength of our proposed DBS updates without gradient-based optimization of $c$.
Full scores of comparison is referred to the supplemental material.

\section{Related Work}
The Boltzmann softmax distribution is widely used in reinforcement learning \cite{littman1996reinforcement,sutton1998introduction,azar2012dynamic,song2018revisiting}.
Singh et al. \cite{singh2000convergence} studied convergence of on-policy algorithm Sarsa, 
where they considered a dynamic scheduling of the parameter in softmax action selection strategy.
However, the state-dependent parameter is impractical in complex problems, e.g., Atari.
Our work differs from theirs as our DBS operator is state-independent, which can be readily scaled to complex problems with high-dimensional state space.
Recently, \cite{song2018revisiting} also studied the error bound of the Boltzmann softmax operator and its application in DQNs.
In contrast, we propose the DBS operator which rectifies the convergence issue of softmax, where we provide a more general analysis of the convergence property.
A notable difference in the theoretical aspect is that we achieve a tighter error bound for softmax in general cases, and we investigate the convergence rate of the DBS operator. 
Besides the guarantee of Bellman optimality, the DBS operator is efficient as it does not require hyper-parameter tuning.
Note that it can be hard to choose an appropriate static value of $\beta$ in \cite{song2018revisiting}, which is game-specific and can result in different performance. 

A number of studies have studied the use of alternative operators, most of which satisfy the non-expansion property \cite{haarnoja2017reinforcement}.
\cite{haarnoja2017reinforcement} utilized the log-sum-exp operator, which enables better exploration and learns deep energy-based policies.
The connection between our proposed DBS operator and the log-sum-exp operator is discussed above.
\cite{bellemare2016increasing} proposed a family of operators which are not necessarily non-expansions, but still preserve optimality while being gap-increasing. However, such conditions are still not satisfied for the Boltzmann softmax operator. 

\section{Conclusion}
We propose the dynamic Boltzmann softamax (DBS) operator in value function estimation with a time-varying, state-independent parameter.
The DBS operator has good convergence guarantee in the setting of planning and learning, which rectifies the convergence issue of the Boltzmann softmax operator.
Results validate the effectiveness of the DBS-DQN algorithm in a suite of Atari games. 
For future work, it is worth studying the sample complexity of our proposed DBS Q-learning algorithm. It is also promising to apply the DBS operator to other state-of-the-art DQN-based algorithms, such as Rainbow \cite{hessel2017rainbow}. 

\clearpage
\appendix

\renewcommand\thesection{\Alph{section}} 

\section{Convergence of DBS Value Iteration}\label{app:dbs_vi_con}
\textbf{Proposition 1}
\emph{
	\begin{equation}
	L_{\beta}({\bf{X}}) - \text{boltz}_{\beta} ({\bf{X}}) = \frac{1}{\beta} \sum_{i=1}^n -p_i \log(p_i) \leq \frac{\log(n)}{\beta},
	\end{equation}
	where $p_i = \frac{e^{\beta x_i}}{\sum_{j=1}^n e^{\beta x_j}}$ denotes the weights of the Boltzmann distribution, $L_{\beta}({\bf{X}})$ denotes the log-sum-exp function $L_{\beta}({\bf{X}}) = \frac{1}{\beta} \log(\sum_{i=1}^n e^{\beta x_i})$, and $\text{boltz}_{\beta}({\bf{X}})$ denotes the Boltzmann softmax function $\text{boltz}_{\beta}({\bf{X}})=  \frac{\sum_{i=1}^n e^{\beta x_i}x_i}{\sum_{j=1}^n e^{\beta x_j}}$.
}

\begin{proof}
	\begin{align}
	&\frac{1}{\beta} \sum_{i=1}^n -p_i \log(p_i) \\
	= &\frac{1}{\beta} \sum_{i=1}^n \left( -\frac{e^{\beta x_i}}{\sum_{j=1}^n e^{\beta x_j}} \log \left(\frac{e^{\beta x_i}}{\sum_{j=1}^n e^{\beta x_j}} \right) \right) \\
	= &\frac{1}{\beta} \sum_{i=1}^n \left( -\frac{e^{\beta x_i}}{\sum_{j=1}^n e^{\beta x_j}} \left( \beta x_i - \log \left( \sum_{j=1}^n e^{\beta x_j} \right) \right) \right) \\
	= &-\sum_{i=1}^n \frac{e^{\beta x_i} x_i}{\sum_{j=1}^n e^{\beta x_j}} + \frac{1}{\beta} \log \left( \sum_{j=1}^n e^{\beta x_j} \right) \frac{\sum_{i=1}^n e^{\beta x_i}}{\sum_{j=1}^n e^{\beta x_j}} \\
	= &-\text{boltz}_{\beta} ({\bf{X}}) + L_{\beta}({\bf{X}})
	\end{align}
	
	Thus, we obtain
	\begin{equation}
	L_{\beta}({\bf{X}}) - \text{boltz}_{\beta} ({\bf{X}}) = \frac{1}{\beta} \sum_{i=1}^n -p_i \log(p_i),
	\end{equation}
	where $\frac{1}{\beta} \sum_{i=1}^n -p_i \log(p_i)$ is the entropy of the Boltzmann distribution.
	
	It is easy to check that the maximum entropy is achieved when $p_i = \frac{1}{n}$, where the entropy equals to $\log(n)$.
	
\end{proof}

\textbf{Theorem 1 (Convergence of value iteration with the DBS operator)} \emph{
	For any dynamic Boltzmann softmax operator $ \beta_t $,
	if $\beta_t \to \infty$,
	$V_t$ converges to $V^*$, where $V_t$ and $V^*$ denote the value function after $t$ iterations and the optimal value function.
}

\begin{proof}
	By the definition of $\mathcal{T}_{\beta_t}$ and $\mathcal{T}_m$, we have
	\begin{equation}
	\begin{split}
	& ||(\mathcal{T}_{\beta_t} V_1) - (\mathcal{T}_m V_2) ||_{\infty} \\
	\leq & \underbrace{|| (\mathcal{T}_{\beta_t} V_1) - (\mathcal{T}_m V_1) ||_{\infty}}_{(A)} + \underbrace{|| (\mathcal{T}_m V_1) - (\mathcal{T}_m V_2) ||_{\infty}}_{(B)} \\
	\end{split}
	\label{th1_1}
	\end{equation}
	
	For the term $(A)$, we have
	\begin{align}
	& ||(\mathcal{T}_{\beta_t} V_1) - (\mathcal{T}_m V_1)||_{\infty} \\
	=& \max_s | {\rm{boltz}}_{\beta_t} (Q_1(s,\cdot)) - \max_a (Q_1(s,a)) | \\
	\leq & \max_s | {\rm{boltz}}_{\beta_t}(Q_1(s, \cdot)) - L_{\beta_t} (Q_1(s,a)) |  \\
	\leq & \frac{\log(|A|)}{\beta_t}, 
	\label{th1_2}
	\end{align}
	where Ineq. (\ref{th1_2}) is derived from Proposition 1.
	
	For the term $(B)$, we have
	\begin{align}
	& || (\mathcal{T}_m V_1) - (\mathcal{T}_m V_2) ||_{\infty} \\
	=& \max_s | \max_{a_1} (Q_1(s, a_1)) - \max_{a_2} (Q_2(s, a_2)) | \\
	\leq & \max_s \max_a | Q_1(s,a) - Q_2(s,a) | \\
	\leq & \max_s \max_a \gamma \sum_{s'} p(s'|s,a) |V_1(s') - V_2(s') | \\
	\leq & \gamma ||V_1 - V_2||
	\label{th1_3}
	\end{align}
	
	Combing (\ref{th1_1}), (\ref{th1_2}), and (\ref{th1_3}), we have
	\begin{equation}
	|| (\mathcal{T}_{\beta_t} V_1) - (\mathcal{T}_m V_2) ||_{\infty} \leq \gamma ||V_1 - V_2||_{\infty} + \frac{\log(|A|)}{\beta_t} \\
	\end{equation}
	
	As the $\max$ operator  is a contraction mapping, then from Banach fixed-point theorem we have
	$\mathcal{T}_m V^* = V^*$
	
	By definition we have
	\begin{align}
	& ||V_t - V^*||_{\infty} \\
	= &  || (\mathcal{T}_{\beta_t} ... \mathcal{T}_{\beta_1}) V_0 - (\mathcal{T}_m ... \mathcal{T}_m) V^*||_{\infty} \\
	\leq & \gamma || (\mathcal{T}_{\beta_{t-1}} ... \mathcal{T}_{\beta_1}) V_0 - (\mathcal{T}_m ... \mathcal{T}_m) V^*||_{\infty} + \frac{\log(|A|)}{\beta_t} \\
	\leq & ... \\
	\leq & \gamma^t ||V_0 - V^*||_{\infty} + \log(|A|) \sum_{k=1}^t \frac{\gamma^{t-k}}{\beta_k}
	\label{eq:vi_con_basic}
	\end{align}
	
	We prove that $\lim_{t \to \infty}\sum_{k=1}^t \frac{\gamma^{t-k}}{\beta_k}=0$.
		
	Since $\lim_{k \to \infty} \frac{1}{\beta_k} = 0$, we have that
	$\forall \epsilon_1 > 0, \exists K(\epsilon_1) > 0, \text{ such that } \forall k > K(\epsilon_1), |\frac{1}{\beta_k}| < \epsilon_1.$
	Thus,
	\begin{align}
	\label{ieq: lim}
	& \sum_{k=1}^t \frac{\gamma^{t - k}}{\beta_k} \\
	=& \sum_{k=1}^{K(\epsilon_1)} \frac{\gamma^{t - k}}{\beta_k} + \sum_{k=K(\epsilon_1)+1}^t \frac{\gamma^{t - k}}{\beta_k} \\
	\leq & \frac{1}{\min_{k \leq t} \beta_k} \sum_{k=1}^{K(\epsilon_1)} \gamma^{t - k} + \epsilon_1 \sum_{k=K(\epsilon_1)+1}^t \gamma^{t-k} \\
	=& \frac{1}{\min_{k \leq t} \beta_k} \frac{\gamma^{t-K(\epsilon_1)} (1 - \gamma^{K(\epsilon_1)})}{1 - \gamma} + \epsilon_1 \frac{1 (1 - \gamma^{t-K(\epsilon_1)})}{1 - \gamma} \\
	\leq & \frac{1}{1 - \gamma} \big( \frac{\gamma^{t-K(\epsilon_1)}}{\min_{k \leq t} \beta_k} + \epsilon_1 \big)
	\end{align}
	
	If $t > \frac{\log ((\epsilon_2(1 - \gamma) - \epsilon_1) \min_{k \leq t} \beta_k)}{\log \gamma} + K(\epsilon_1)$ and $\epsilon_1 < \epsilon_2(1 - \gamma)$, then 
	\begin{equation}
	\sum_{k=1}^t \frac{\gamma^{t - k}}{\beta_k} < \epsilon_2.
	\end{equation}
	
	So we obtain that
	$\forall \epsilon_2 > 0, \exists T > 0$, such that
	\begin{equation}
	\forall t > T, |\sum_{k=1}^t \frac{\gamma^{t-k}}{\beta_k}| < \epsilon_2.
	\end{equation}
	
	Thus, $\lim_{t \to \infty} \sum_{k=1}^t \frac{\gamma^{t - k}}{\beta_k} = 0.$
	
	Taking the limit of the right side of the inequality (\ref{eq:vi_con_basic}), we have that
	\begin{equation}
	\lim_{t \to \infty} \left[ \gamma^t ||V_1 - V^*||_{\infty} + \log(|A|) \sum_{k=1}^t \frac{\gamma^{t-k}}{\beta_k} \right] = 0
	\end{equation}
	
	Finally, we obtain
	\begin{equation}
	\lim_{t \to \infty} ||V_t - V^*||_{\infty} = 0.
	\end{equation}
	
\end{proof}

\section{Convergence Rate of DBS Value Iteration}\label{app:con_bound} 
\textbf{Theorem 2 (Convergence rate of value iteration with the DBS operator)} \emph{
	For any power series $\beta_t = t^p (p > 0)$, let $V_0$ be an arbitrary initial value function such that $||V_0||_{\infty}\leq \frac{R}{1-\gamma}$, where $R=\max_{s,a}|r(s,a)|$, 
	we have that for any non-negative $\epsilon < 1/4$, after
	$\max \{ O\big( \frac{\log(\frac{1}{\epsilon})  + \log(\frac{1}{1 - \gamma}) + \log (R)}{\log(\frac{1}{\gamma})}), O\big({( \frac{1}{(1 - \gamma) \epsilon})}^{\frac{1}{p}}\big) \}$
	steps, the error $||V_t - V^*||_{\infty} \leq \epsilon$.
}

\begin{proof}
	\begin{align}
	\sum_{k=1}^t \frac{\gamma^{t-k}}{k^p}
	&= \gamma^t \big[ \sum_{k=1}^{\infty} \frac{\gamma^{-1}}{k^p} - \sum_{k=t+1}^{\infty} \frac{\gamma^{-1}}{k^p} \big] \\
	&= \gamma^t \big[ \underbrace{{\rm Li}_p(\gamma^{-1})}_{\rm Polylogarithm} - \gamma^{-(t+1)} \underbrace{\Phi(\gamma^{-1}, p, t+1)}_{\rm Lerch \ transcendent} \big]
	\label{bound_eq_1}
	\end{align}
	
	By \cite{ferreira2004asymptotic}, we have
	\begin{align}
	{\rm Eq.} \ (\ref{bound_eq_1}) &= \Theta \left( \gamma^t \frac{\gamma^{-(t+1)}}{\gamma^{-1}-1} \frac{1}{(t+1)^p} \right) \\ 
	&= \frac{1}{(1-\gamma)(t+1)^p}
	\end{align}
	
	From Theorem 2, we have
	\begin{align}
	||V_t - V^*|| &\leq \gamma^t ||V_1 - V^*|| + \frac{\log(|A|)}{(1 - \gamma)(t+1)^p} \\
	& \leq 2\max \{ \gamma^t ||V_1 - V^*||, \frac{\log(|A|)}{(1 - \gamma)(t+1)^p} \}
	\end{align}
	
	Thus, for any $\epsilon > 0$, after at most 
	$t = \max \{\frac{\log(\frac{1}{\epsilon}) + \log(\frac{1}{1 - \gamma}) + \log(R) + \log(4)}{\log(\frac{1}{\gamma})}, {\big( \frac{2 \log(|A|)}{(1 - \gamma) \epsilon} \big)}^{\frac{1}{p}} - 1\}$ steps, we have $||V_t - V^*|| \leq \epsilon$.
\end{proof}

\section{Error Bound of Value Iteration with Fixed Boltzmann Softmax Operator}

\begin{corollary}(Error bound of value iteration with Boltzmann softmax operator)
For any Boltzmann softmax operator with fixed parameter $\beta$, we have
\begin{equation}
\lim_{t\rightarrow \infty}||V_t - V^*||_{\infty} \leq \min\left\{\frac{\log(|A|)}{\beta (1 - \gamma)}, \frac{2R}{{(1-\gamma)}^{2}}\right\}.
\end{equation}
\label{cor:bs_error_bound}
\end{corollary}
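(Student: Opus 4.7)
My plan is to obtain both terms inside the $\min$ as direct specializations of calculations already present in the proof of Theorem~\ref{tm:dbs_con}. The first is a geometric-series collapse once $\beta_t$ is held fixed at $\beta$, and the second is a universal range bound following from the fact that $\mathrm{boltz}_\beta$ is a convex combination of its arguments.

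For the first bound, I will instantiate the one-step inequality
$$||\mathcal{T}_{\beta_t} V_1 - \mathcal{T}_m V_2||_\infty \leq \gamma \, ||V_1 - V_2||_\infty + \frac{\log(|A|)}{\beta_t}$$
established inside the proof of Theorem~\ref{tm:dbs_con} with $\beta_t \equiv \beta$, $V_1 = V_{t-1}$, and $V_2 = V^*$. Since $V^*$ is a fixed point of $\mathcal{T}_m$, this collapses to the recursion $||V_t - V^*||_\infty \leq \gamma \, ||V_{t-1} - V^*||_\infty + \log(|A|)/\beta$, which unrolls to
$$||V_t - V^*||_\infty \leq \gamma^t \, ||V_0 - V^*||_\infty + \frac{\log(|A|)}{\beta} \cdot \frac{1 - \gamma^t}{1 - \gamma}.$$
Sending $t \to \infty$ kills the first summand (because $\gamma < 1$) and collapses the finite geometric sum to $\log(|A|) / \bigl(\beta(1 - \gamma)\bigr)$, which is precisely the first term inside the min.

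For the second bound, I will exploit the convex-combination structure of the softmax: $\mathrm{boltz}_\beta(Q(s, \cdot))$ is a weighted average of $\{Q(s, a)\}_a$ whose weights $e^{\beta Q(s,a)} / \sum_{a'} e^{\beta Q(s, a')}$ are nonnegative and sum to one, so $|\mathrm{boltz}_\beta(Q(s, \cdot))| \leq \max_a |Q(s, a)|$. Consequently $||V_{t+1}||_\infty \leq R + \gamma \, ||V_t||_\infty$, and iterating this produces $\limsup_{t \to \infty} ||V_t||_\infty \leq R / (1 - \gamma)$ for any finite initializer, with the analogous bound on $V^*$ following from the Bellman equation. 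A triangle inequality then gives $\limsup_{t \to \infty} ||V_t - V^*||_\infty \leq 2R / (1 - \gamma)$, which, since $1 - \gamma \leq 1$, is itself at most $2R / (1 - \gamma)^2$. Taking the pointwise minimum of the two bounds closes the proof.

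I do not anticipate any substantive obstacle: the argument is essentially bookkeeping on the recursion already derived in Theorem~\ref{tm:dbs_con} together with a trivial range bound. The only care point is the short induction controlling $||V_t||_\infty$, which relies on the softmax weights forming a probability distribution, so it is worth flagging this structural fact explicitly when writing out the full proof.
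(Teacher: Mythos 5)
Your proposal is correct. The first term is obtained exactly as in the paper: specialize the one-step inequality $\|\mathcal{T}_{\beta}V_1 - \mathcal{T}_m V_2\|_\infty \le \gamma\|V_1 - V_2\|_\infty + \log(|A|)/\beta$ to the fixed point $V_2 = V^*$, unroll, and collapse the geometric series to $\log(|A|)/(\beta(1-\gamma))$; the paper does precisely this via its Eq.~(\ref{eq:vi_con_basic}). For the second term, however, you take a genuinely different and in fact sharper route. The paper stays inside the operator-perturbation framework: it bounds the discrepancy $\|\mathcal{T}_\beta V - \mathcal{T}_m V\|_\infty$ by the max--min gap of the $Q$-values, $\max_a Q(s,a) - \min_a Q(s,a) \le 2R + \gamma\big(\max_{s'} V(s') - \min_{s'} V(s')\big)$, then controls that gap by a second recursion, $\max_s V_k - \min_s V_k \le 2R(1-\gamma^k)/(1-\gamma) + \gamma^k(\max_s V_0 - \min_s V_0)$, and feeds this back into the error recursion to arrive at $2R/(1-\gamma)^2$. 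You instead use the convex-combination structure of the softmax to bound the iterates themselves: $\|V_{t+1}\|_\infty \le R + \gamma\|V_t\|_\infty$ gives $\limsup_t \|V_t\|_\infty \le R/(1-\gamma)$, which together with $\|V^*\|_\infty \le R/(1-\gamma)$ and the triangle inequality yields $\limsup_t \|V_t - V^*\|_\infty \le 2R/(1-\gamma)$. This is strictly tighter than the paper's $2R/(1-\gamma)^2$ (since $0 < 1-\gamma < 1$) and trivially implies the stated bound, so your argument buys both simplicity (one recursion instead of two) and a better constant, at the cost of stepping outside the operator-difference template that the paper reuses elsewhere (e.g., in its convergence-rate analysis). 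One small point to keep when writing it up: your use of $\limsup$ is the right formulation, since neither your argument nor the paper's establishes that the limit exists; the corollary's $\lim$ should be read in that sense.
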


\begin{proof}
By Eq. (\ref{eq:vi_con_basic}), it is easy to get that for fixed $\beta$, 

\begin{equation}
\lim_{t\rightarrow \infty}||V_t - V^*||_{\infty} \leq \frac{\log(|A|)}{\beta (1 - \gamma)}.
\end{equation}

On the other hand, we get that 
\begin{align}
	& ||(\mathcal{T}_{\beta} V_1) - (\mathcal{T}_m V_1)||_{\infty} \\
	=& \max_s | {\rm{boltz}}_{\beta} (Q_1(s,\cdot)) - \max_a Q_1(s,a)| \\
	\leq & \max_s |\max_a Q_1(s,a) - \min_a Q_1(s,a) |  \\
	\leq & \max_s |(\max_a r(s,a) - \min_a r(s,a))\\ 
	+ & \gamma (\max_{s'} V_1(s') - \min_{s'} V_1(s'))|\\
	\leq & 2R + \gamma (\max_{s} V_1(s') - \min_{s} V_1(s')).
\label{fix_eq_1}
\end{align}

Combing (\ref{th1_1}), (\ref{th1_3}) and (\ref{fix_eq_1}), we have
	\begin{equation}
	\begin{split}
	&|| (\mathcal{T}_{\beta} V_1) - (\mathcal{T}_m V_2) ||_{\infty} \\
	 \leq &  \gamma ||V_1 - V_2||_{\infty} + 2R + \gamma (\max_{s} V_1(s') - \min_{s} V_1(s')). 
	\end{split}
	\end{equation}
	
Then by the same way in the proof of Theorem 1, 

\begin{align}
	 &||V_t - V^*||_{\infty} \leq  \gamma^t ||V_0 - V^*||_{\infty} \\
	 + & \sum_{k=1}^t \gamma^{t-k} (2R + \gamma (\max_{s} V_{k-1}(s') - \min_{s} V_{k-1}(s'))).
	\label{eq:fix_vi_con_basic}
	\end{align}

Now for the Boltzmann softmax operator, we derive the upper bound of the gap between the maximum value and the minimum value at any timestep $k$.

For any $k$, by the same way, we have
\begin{align}
	& \max_{s} V_{k}(s') - \min_{s} V_{k}(s') \\
	\leq & 2R + \gamma (\max_{s} V_{k-1}(s') - \min_{s} V_{k-1}(s')).
\label{fix_eq_2}
\end{align}

Then by (\ref{fix_eq_2}), 
\begin{equation}
\begin{split}
&\max_{s} V_{k}(s') - \min_{s} V_{k}(s')\\
 \leq & \frac{2R(1-{\gamma}^k)}{1-\gamma} + {\gamma}^{k} (\max_{s} V_{0}(s') - \min_{s} V_{0}(s')).
\end{split}
\label{fix_eq_3}
\end{equation}

Combining (\ref{eq:fix_vi_con_basic}) and (\ref{fix_eq_3}), and Taking the limit, we have

\begin{equation}
\lim_{t\rightarrow \infty}||V_t - V^*||_{\infty} \leq \frac{2R}{{(1-\gamma)}^{2}}.
\end{equation}

\end{proof}

\section{Convergence of DBS Q-Learning}\label{app:q_con}
\textbf{Theorem 3 (Convergence of DBS Q-learning)} \emph{
	The Q-learning algorithm with dynamic Boltzmann softmax policy given by
	\begin{equation}
	\label{eq:q_update}
	\begin{split}
	Q_{t+1}(s_t, a_t) = & (1 - \alpha_t(s_t, a_t)) Q_t(s_t, a_t) + \alpha_t(s_t, a_t) \\
	& [r_t + \gamma {\rm{boltz}}_{\beta_t} (Q_t(s_{t+1}, \cdot))]
	\end{split}
	\end{equation}
	converges to the optimal $Q^*(s,a)$ values if
	\begin{enumerate}
		\item The state and action spaces are finite, and all state-action pairs are visited infinitely often.
		\item $\sum_{t} \alpha_t(s,a) = \infty$ and $\sum_{t} \alpha_t^2(s,a) < \infty$
		\item $\lim_{t \to \infty} \beta_t = \infty$
		\item $\text{Var}(r(s,a))$ is bounded.
	\end{enumerate}
}

\begin{proof}
	Let $\Delta_t(s,a) = Q_t(s,a) - Q^*(s,a)$ and $F_t(s,a) = r_t + \gamma {\rm{boltz}}_{\beta_t} (Q_t(s_{t+1}, \cdot)) - Q^*(s,a)$
	
	Thus, from Eq. (\ref{eq:q_update}) we have
	\begin{equation}
    \Delta_{t+1}(s,a) = (1 - \alpha_t(s,a)) \Delta_t(s,a) + \alpha_t(s,a) F_t(s,a),
	\end{equation}
	which has the same form as the process defined in Lemma 1 in \cite{singh2000convergence}.
	
	Next, we verify $F_t(s,a)$ meets the required properties.
	\begin{align}
	&F_t(s,a) \\
	=& r_t + \gamma {\rm{boltz}}_{\beta_t} (Q_t(s_{t+1}, \cdot)) - Q^*(s,a) \\
	=& \left( r_t + \gamma \max_{a_{t+1}} Q_t(s_{t+1}, a_{t+1}) - Q^*(s,a) \right) + \\
	& \gamma \left( {\rm{boltz}}_{\beta_t} (Q_t(s_{t+1}, \cdot)) - \max_{a_{t+1}} Q_t(s_{t+1}, a_{t+1}) \right) \\
	\overset{\Delta}{=} & G_t(s,a) + H_t(s,a)
	\end{align}
	
	For $G_t$, it is indeed the $F_t$ function as that in Q-learning with static exploration parameters, which satisfies
	\begin{equation}
	|| \mathbb{E} [ G_t(s,a) ] | P_t||_w \leq \gamma || \Delta_t ||_w
	\end{equation}
	
	For $H_t$, we have
	\begin{align}
	& | \mathbb{E} [ H_t(s,a) ] | \\
	=& \gamma \big| \sum_{s'} p(s'|s,a) [{\rm{boltz}}_{\beta_t} (Q_t(s', \cdot)) - \max_{a'} Q_t(s', a')] \big| \\
	\leq & \gamma \big| \max_{s'} [{\rm{boltz}}_{\beta_t} (Q_t(s', \cdot)) - \max_{a'} Q_t(s', a')] \big| \\
	\leq & \gamma \max_{s'} \big| {\rm{boltz}}_{\beta_t} (Q_t(s', \cdot)) - \max_{a'} Q_t(s', a') \big| \\
	\leq & \gamma \max_{s'} \left| {\rm{boltz}}_{\beta_t} (Q_t(s', \cdot)) - L_{\beta_t} (Q_t(s', \cdot)) \right| \\
	\leq & \frac{\gamma \log(|A|)}{\beta_t}
	\end{align}
	
	Let $h_t = \frac{\gamma \log(|A|)}{\beta_t}$, so we have
	\begin{equation}
	|| \mathbb{E} [ F_t(s,a) ] | P_t||_w \leq \gamma || \Delta_t ||_w + h_t,
	\end{equation}
	where $h_t$ converges to $0$.
	
\end{proof}

\section{Analysis of the Overestimation Effect}
\textbf{Proposition 2}
\emph{
	For $\beta_t$ , $\beta > 0$ and M dimensional vector $x$, we have	
	\begin{equation}
\frac{\sum_{i=1}^M e^{\beta_t x_i} x_i}{\sum_{i=1}^M e^{\beta_t x_i}} \leq \max_{i} x_i \leq \frac{1}{\beta} \log \left( \sum_{i=1}^M e^{\beta x_i} \right).
\end{equation}
}

\begin{proof}
	As the dynamic Boltzman softmax operator summarizes a weighted combination of the vector $X$, it is easy to see
	\begin{equation}
	\forall \beta_t>0, \ \frac{\sum_{i=1}^M e^{\beta_t x_i} x_i}{\sum_{i=1}^M e^{\beta_t x_i}} \leq \max_{i} x_i.
	\end{equation}

	Then, it suffices to prove 
	\begin{equation}
	\max_{i} x_i \leq \frac{1}{\beta} \log \left( \sum_{i=1}^M e^{\beta x_i} \right).
	\label{ineq: lse_max}
	\end{equation}

	Multiply $\beta$ on both sides of Ineq. (\ref{ineq: lse_max}), it suffices to prove that
	\begin{equation}
	\max_i \beta x_i \leq \log(\sum_{i=1}^n e^{\beta x_i}).
	\end{equation}
	
	As 
	\begin{equation}
	\max_i \beta x_i = \log(e^{\max_i \beta x_i}) \leq \log(\sum_{i=1}^n e^{\beta x_i}),
	\label{ineq: lse_max_lhs}
	\end{equation}
	Ineq. (\ref{ineq: lse_max}) is satisfied.
	
\end{proof}

\textbf{Theorem 4}  \emph{
	Let $\hat{\mu}^*_{B_{\beta_t}}, \hat{\mu}^{*}_{\max}, \hat{\mu}^{*}_{L_{\beta}}$ denote the estimator with the DBS operator, the max operator, and the log-sum-exp operator, respectively.
	For any given set of $M$ random variables, we have
	$$\forall t, \ \forall \beta, \ \rm{Bias}(\hat{\mu}^*_{B_{\beta_t}}) \leq \rm{Bias}(\hat{\mu}^{*}_{\max}) \leq \rm{Bias}(\hat{\mu}^{*}_{L_{\beta}})$$.
}

\begin{proof}
By definition, the bias of any action-value summary operator $\bigotimes$ is defined as 

\begin{equation}
\label{bias_definition}
\rm{Bias}(\hat{\mu}^*_{\bigotimes}) = \mathbb{E}_{\hat{\mu}\sim \hat{F}}[\bigotimes \hat{\mu}] - \mu^*(X).
\end{equation}

By Proposition 2, we have 

\begin{equation}
\label{expectation}
\begin{split}
\mathbb{E}_{\hat{\mu}\sim \hat{F}}[\frac{\sum_{i=1}^M e^{\beta_t \hat{\mu}_i}\hat{\mu}_i}{\sum_{i=1}^M e^{\beta_t \hat{\mu}_i}} ]  \leq & \mathbb{E}_{\hat{\mu}\sim \hat{F}}[\max_i \hat{\mu}_i]  \\
\leq & \mathbb{E}_{\hat{\mu}\sim \hat{F}}[ \frac{1}{\beta} \log \left( \sum_{i=1}^M e^{\beta  \hat{\mu}_i} \right)] .
\end{split}
\end{equation}

As the ground true maximum value $\mu^*(X)$ is invariant for different operators, combining (\ref{bias_definition}) and (\ref{expectation}), we get

\begin{equation}
\forall t, \ \forall \beta, \ \rm{Bias}(\hat{\mu}^*_{B_{\beta_t}}) \leq \rm{Bias}(\hat{\mu}^{*}_{\max}) \leq \rm{Bias}(\hat{\mu}^{*}_{L_{\beta}}).
\end{equation}

\end{proof}

\section{Empirical Results for DBS Q-learning}

The GridWorld consists of $10 \times 10$ grids, with the dark grids representing walls. The agent starts at the upper left corner and aims to eat the apple at the bottom right corner upon receiving a reward of $+1$. Otherwise, the reward is $0$. An episode ends if the agent successfully eats the apple or a maximum number of steps $300$ is reached.

\begin{figure}[!h]
\centering
\includegraphics[scale=0.26]{./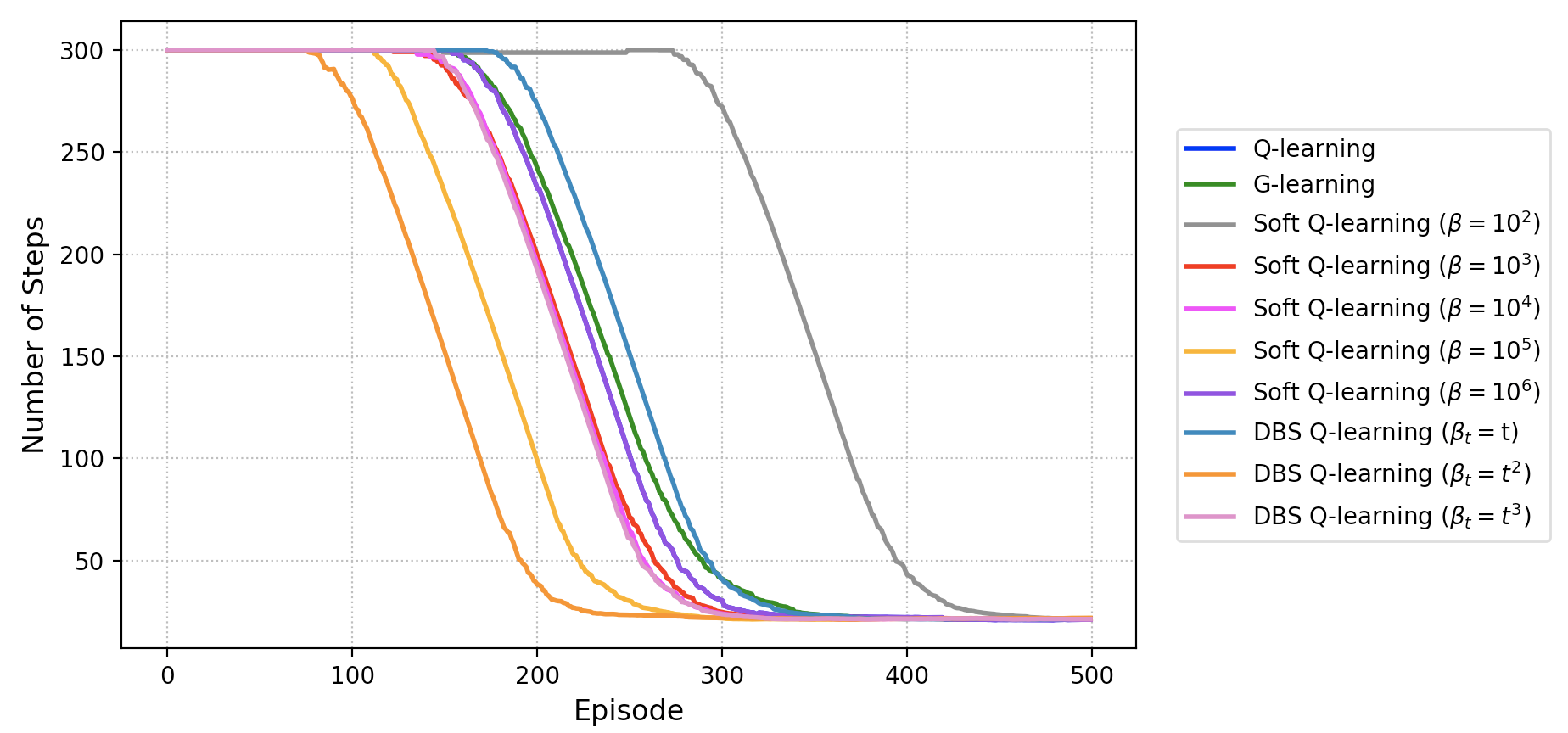}
\label{fig:all_q_res}
\caption{Detailed results in the GridWorld.}
\end{figure}

Figure 1 shows the full performance comparison results among DBS Q-learning, soft Q-learning \cite{haarnoja2017reinforcement}, G-learning \cite{fox2015taming}, and vanilla Q-learning.

As shown in Figure 1, different choices of the parameter of the log-sum-exp operator for soft Q-learning leads to different performance.
A small value of $\beta$ ($10^2$) in the log-sum-exp operator results in poor performance, which is significantly worse than vanilla Q-learning due to extreme overestimation.
When $\beta$ is in $\left\{10^3, 10^4, 10^5\right\}$, it starts to encourage exploration due to entropy regularization, and performs better than Q-learning, where the best performance is achieved at the value of $10^5$.
A too large value of $\beta$ ($10^6$) performs very close to the $\max$ operator employed in Q-learning.
Among all, DBS Q-learning with $\beta=t^2$ achieves the best performance.

\section{Implementation Details}\label{app:implementation_detail}
For fair comparison, we use the same setup of network architectures and hyper-parameters as in \cite{mnih2015human} for both DQN and DBS-DQN.
The network architecture is the same as in (\cite{mnih2015human}).
The input to the network is a raw pixel image, which is pre-processed into a size of $84 \times 84 \times 4$.
Table \ref{table:network_architecture} summarizes the network architecture.

\begin{table*}[h]
\centering
\begin{tabular}{l|l|l|l}
\toprule
\textbf{\textsc{layer}} & \textbf{\textsc{type}} & \textbf{\textsc{configuration}} & \textbf{\textsc{activation}} \\
\hline
\multirow{3}*{1st} & \multirow{3}*{convolutional} & \#filters=32 & \multirow{3}*{ReLU} \\
~ & ~ & size=$8 \times 8$ & ~ \\
~ &~ &stride=$4$  & ~ \\
\hline
\multirow{3}*{2nd} & \multirow{3}*{convolutional} & \#filters=64 & \multirow{3}*{ReLU} \\
~ & ~ & size=$4 \times 4$ & ~ \\
~ &~ &stride=$2$  & ~ \\
\hline
\multirow{3}*{3rd} & \multirow{3}*{convolutional} & \#filters=64 & \multirow{3}*{ReLU} \\
~ & ~ & size=$3 \times 3$ & ~ \\
~ &~ &stride=$1$  & ~ \\
\hline
4th & fully-connected & \#units=512 & ReLU \\
\hline
output & fully-connected & \#units=\#actions & --- \\
\bottomrule
\end{tabular}
\caption{Network architecture.}
\label{table:network_architecture}
\end{table*}

\section{Relative human normalized score on Atari games}
To better characterize the effectiveness of DBS-DQN, its improvement over DQN is shown in Figure \ref{fig: atari_sum}, where the improvement is defined as the relative human normalized score:
\begin{equation}
\frac{\rm{score_{agent}} - {\rm {score_{baseline}}}}{\max\{{\rm score_{human}}, {\rm {score_{baseline}}}\} - {\rm score_{random}}} \times 100\%,
\end{equation}
with DQN serving as the baseline.

\begin{figure}[!h]
    \centering
    \includegraphics[scale=0.41]{./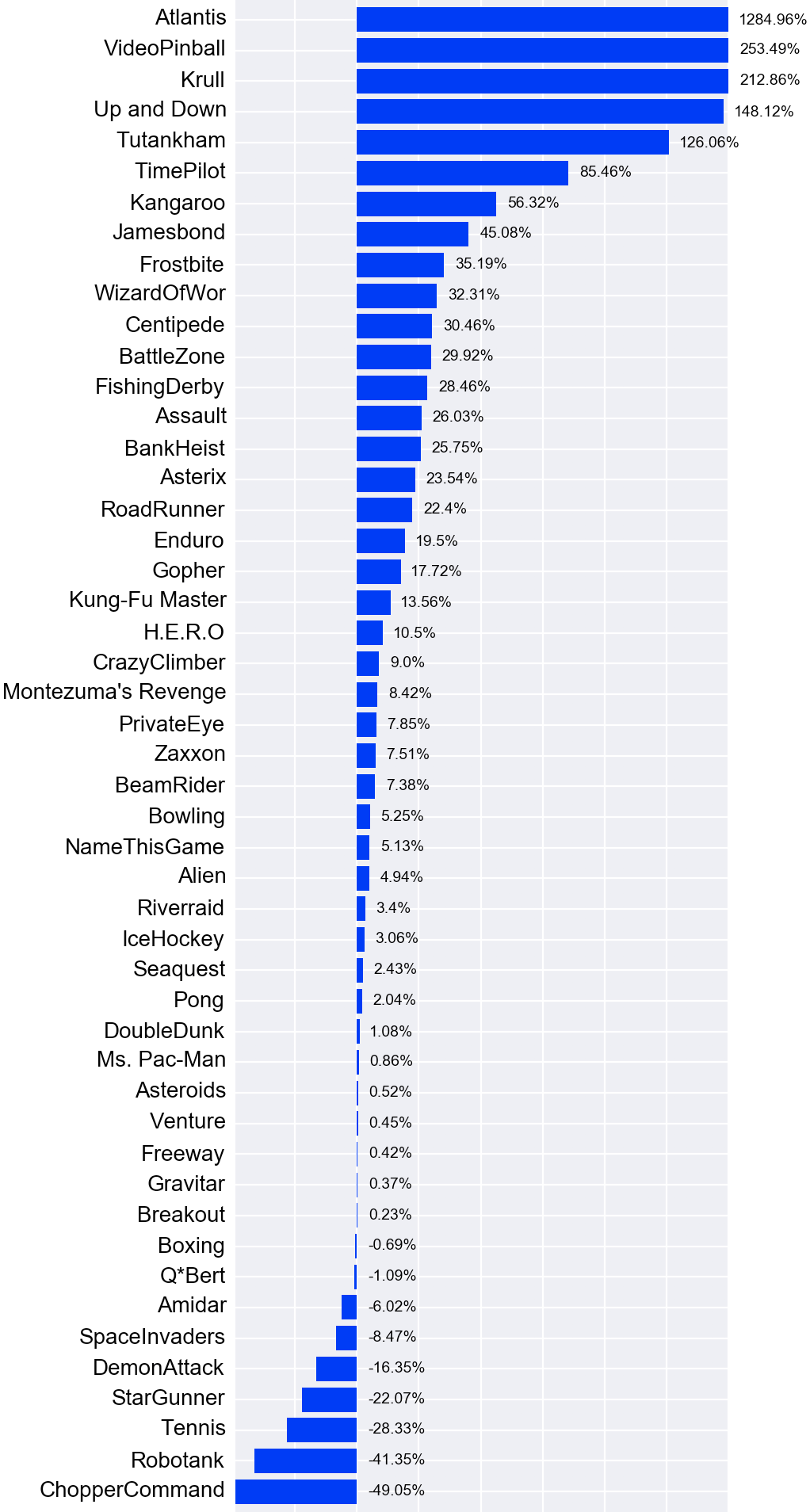}
    \caption{Relative human normalized score on Atari games. }
    \label{fig: atari_sum}
\end{figure}

\section{Atari Scores}
\begin{figure*}[!h]
\small
\centering
\begin{tabular}{ l|l|l|l|l|l}
  \textbf{\textsc{games}}  &  \textbf{\textsc{random}}  &  \textbf{\textsc{human}}  &  \textbf{\textsc{dqn}}  &   \textbf{\textsc{dbs-dqn}} & \textbf{\textsc{dbs-dqn (fixed $c$)}} \\
\hline
Alien & 227.8 & 7,127.7 & 1,620.0 & 1,960.9 & 2,010.4\\
Amidar & 5.8 & 1,719.5 & 978.0 & 874.9 & 1,158.4\\
Assault & 222.4 & 742.0 & 4,280.4 & 5,336.6 & 4,912.8\\
Asterix & 210.0 & 8,503.3 & 4,359.0 & 6,311.2 & 4,911.6\\
Asteroids & 719.1 & 47,388.7 & 1,364.5 & 1,606.7 & 1,502.1\\
Atlantis & 12,850.0 & 29,028.1 & 279,987.0 & 3,712,600.0 & 3,768,100.0\\
Bank Heist & 14.2 & 753.1 & 455.0 & 645.3 & 613.3\\
Battle Zone & 2,360.0 & 37,187.5 & 29,900.0 & 40,321.4 & 38,393.9\\
Beam Rider & 363.9 & 16,926.5 & 8,627.5 & 9,849.3 & 9,479.1\\
Bowling & 23.1 & 160.7 & 50.4 & 57.6 & 61.2\\
Boxing & 0.1 & 12.1 & 88.0 & 87.4 & 87.7\\
Breakout & 1.7 & 30.5 & 385.5 & 386.4 & 386.6\\
Centipede & 2,090.9 & 12,017.0 & 4,657.7 & 7,681.4 & 5,779.7\\
Chopper Command & 811.0 & 7,387.8 & 6,126.0 & 2,900.0 & 1,600.0\\
Crazy Climber & 10,780.5 & 35,829.4 & 110,763.0 & 119,762.1 & 115,743.3\\
Demon Attack & 152.1 & 1,971.0 & 12,149.4 & 9,263.9 & 8,757.2\\
Double Dunk & -18.6 & -16.4 & -6.6 & -6.5 & -9.1\\
Enduro & 0.0 & 860.5 & 729.0 & 896.8 & 910.3\\
Fishing Derby & -91.7 & -38.7 & -4.9 & 19.8 & 12.2\\
Freeway & 0.0 & 29.6 & 30.8 & 30.9 & 30.8\\
Frostbite & 65.2 & 4,334.7 & 797.4 & 2,299.9 & 1,788.8\\
Gopher & 257.6 & 2,412.5 & 8,777.4 & 10,286.9 & 12,248.4\\
Gravitar & 173.0 & 3,351.4 & 473.0 & 484.8 & 423.7\\
H.E.R.O. & 1,027.0 & 30,826.4 & 20,437.8 & 23,567.8 & 20,231.7\\
Ice Hockey & -11.2 & 0.9 & -1.9 & -1.5 & -2.0\\
James Bond & 29.0 & 302.8 & 768.5 & 1,101.9 & 837.5\\
Kangaroo & 52.0 & 3,035.0 & 7,259.0 & 11,318.0 & 12,740.5\\
Krull & 1,598.0 & 2,665.5 & 8,422.3 & 22,948.4 & 7,735.0\\
Kung-Fu Master & 258.5 & 22,736.3 & 26,059.0 & 29,557.6 & 29,450.0\\
Montezuma’s Revenge & 0.0 & 4,753.3 & 0.0 & 400.0 & 400.0\\
Ms. Pac-Man & 307.3 & 6,951.6 & 3,085.6 & 3,142.7 & 2,795.6\\
Name This Game & 2,292.3 & 8,049.0 & 8,207.8 & 8,511.3 & 8,677.0\\
Pong & -20.7 & 14.6 & 19.5 & 20.3 & 20.3\\
Private Eye & 24.9 & 69,571.3 & 146.7 & 5,606.5 & 2,098.4\\
Q*Bert & 163.9 & 13,455.0 & 13,117.3 & 12,972.7 & 10,854.7\\
River Raid & 1,338.5 & 17,118.0 & 7,377.6 & 7,914.7 & 8,138.7\\
Road Runner & 11.5 & 7,845.0 & 39,544.0 & 48,400.0 & 44,900.0\\
Robotank & 2.2 & 11.9 & 63.9 & 42.3 & 41.9\\
Seaquest & 68.4 & 42,054.7 & 5,860.6 & 6,882.9 & 6,974.8\\
Space Invaders & 148.0 & 1,668.7 & 1,692.3 & 1,561.5 & 1,311.9\\
Star Gunner & 664.0 & 10,250.0 & 54,282.0 & 42,447.2 & 38,183.3\\
Tennis & -23.8 & -8.3 & 12.2 & 2.0 & 2.0\\
Time Pilot & 3,568.0 & 5,229.2 & 4,870.0 & 6,289.7 & 6,275.7\\
Tutankham & 11.4 & 167.6 & 68.1 & 265.0 & 277.0\\
Up and Down & 533.4 & 11,693.2 & 9,989.9 & 26,520.0 & 20,801.5\\
Venture & 0.0 & 1,187.5 & 163.0 & 168.3 & 102.9\\
Video Pinball & 16,256.9 & 17,667.9 & 196,760.4 & 654,327.0 & 662,373.0\\
Wizard Of Wor & 563.5 & 4,756.5 & 2,704.0 & 4,058.7 & 2,856.3\\
Zaxxon & 32.5 & 9,173.3 & 5,363.0 & 6,049.1 & 6,188.7
\end{tabular}
\caption{Raw scores for a single seed across all games, starting with 30 no-op actions. Reference values from \cite{wang2015dueling}.\label{fig:atari_sota}}
\end{figure*}

\clearpage
\bibliography{example_paper}
\bibliographystyle{example_paper}

\end{document}